\newlength{\defbaselineskip}
\newcommand{\setlinespacing}[1]%
           {\setlength{\baselineskip}{#1 \defbaselineskip}}
\newcommand{\bR}{{\mathbb{R}}}
\newcommand{\actaqed}{\hfill $\actabox$}
{\medskip\noindent \textit{Proof of #1. }}%
{\actaqed \medskip}
\def\D{{\mathcal D}}
\def\R{{\mathbb R}}
\def \<{\langle}
\def\>{\rangle}
\def \e{\epsilon}
\def \ff{\varphi}
\def \sp{\operatorname{span}}
\def\bt{\beta}
\def\la{\lambda}
\def\al{\alpha}
\def\ga{\gamma}
\def\ka{\kappa}
\newcommand{\be}{\begin{equation}}
\newcommand{\ee}{\end{equation}}
\newtheorem{Theorem}{Theorem}[section]
\newtheorem{Lemma}{Lemma}[section]
\newtheorem{Proposition}{Proposition}[section]
\numberwithin{equation}{section}
\begin{document}
\title{{Dictionary descent in optimization} }
\author{V.N. Temlyakov \thanks{ University of South Carolina and Steklov Institute of Mathematics. Research was supported by NSF grant DMS-1160841}} \maketitle
\begin{abstract}
{ The problem of convex optimization is studied. Usually in convex optimization the minimization is over a $d$-dimensional domain.  Very often the convergence rate of an optimization algorithm depends on the dimension $d$. The algorithms studied in this paper utilize dictionaries instead of a canonical basis used in the coordinate descent algorithms. We show how this approach allows us to reduce dimensionality of the problem. Also, we investigate which properties of a dictionary are beneficial for the convergence rate of typical greedy-type algorithms. }
\end{abstract}

\section{Introduction. Known results}

{\bf 1.1. The problem setting.}
A typical 
problem of convex optimization is to find an approximate solution to the problem
\begin{equation}\label{1.1}
E^*:=\inf_{x\in D} E(x)
\end{equation}
under assumption that $E$ is a convex function. In the case that we are optimizing over the whole space $X$, it is called an {\it unconstrained optimization problem}. In many cases we are interested either in optimizing over $x$ of special structure or in optimizing over $x$ from a given domain $D$ ({\it constrained optimization problem}).   Usually in convex optimization, the function $E$ is defined on a finite dimensional space $\R^d$ (see \cite{BL}, \cite{N}). Very often the convergence rate of an optimization algorithm depends on the dimension $d$. 
Many contemporary numerical applications ambient space $\R^d$ involves a large dimension $d$ and we would like to obtain bounds on the convergence rate independent of the dimension $d$. Our recent results on optimization on infinite dimensional Banach spaces 
provide such bounds on the convergence rate (see \cite{Tco1}, \cite{Tco2}, \cite{DT1}, \cite{Tco3}). Solving (\ref{1.1}) is an example of a high dimensional problem and is known to suffer the curse of dimensionality without
additional assumptions on $E$ which serve to reduce its dimensionality.  These additional assumptions take the form
of smoothness restrictions on $E$ and assumptions which   imply that the minimum in (\ref{1.1}) is attained on
a   subset of $D$ with additional structure.  Typical assumptions for the latter involve notions of sparsity or compressibility,
which are by now heavily employed concepts for high dimensional problems.  We will always assume that there is   a point $x^*\in D$ where the minimum $E^*$ is attained, $E(x^*)=E^*$. 

The algorithms studied in this paper utilize dictionaries $\D$ of $X$.   We say that a set of elements (functions) $\D$ from $X$ is a dictionary  if each $g\in \D$ has norm bounded by one ($\|g\|\le1$),
and the closure of $\sp \D$ is $X$.  The symmetric dictionary is $\D^\pm :=\{\pm g,g\in \D\}$. We denote the closure (in $X$) of the convex hull of $\D^\pm$ by $A_1(\D)$. In other words $A_1(\D)$ is the closure of conv($\D^\pm$). We use this notation because it has become a standard notation in relevant greedy approximation literature. 
 Given such a dictionary $\D$, there are several types of domains $D$ that are employed in applications.   Sometimes, these domains are the natural domain of the physical problem.   Other times these are constraints imposed on the minimization problem to ameliorate high dimensionality.  We mention the following two common settings.
 
{\bf Sparsity Constraints:}   The set $\Sigma_m(\D)$ of functions
\be
\label{sparse}
f=\sum_{g\in\Lambda}c_gg,\quad |\Lambda|=m,
\ee
is called the set of {\it sparse} functions of order $m$ with respect to the dictionary $\D$.  One common assumption is to minimize $E$ on the domain $D=\Sigma_m(\D)$, i.e. to look for an $m$ sparse minimizer of  (\ref{1.1}).

{\bf $\ell_1$ constraints:}
  A more general setting is to minimize $E$ over the closure $A_1(\D)$ (in $X$)  of the convex hull  of $\D^{\pm}$.    A slightly more general setting is to minimize $E$ over  one of the sets
 \be 
 \label{LM}
 {\mathcal L}_M:=\{g\in X:\ g/M\in A_1(\D)\}.
 \ee

It is pointed out in \cite{FNW} that there has been considerable interest in solving the convex unconstrained optimization problem 
\begin{equation}\label{1.4}
\min_{x}\frac{1}{2}\|y-\Phi x\|_2^2 +\la\|x\|_1
\end{equation}
where $x\in \R^n$, $y\in \R^k$, $\Phi$ is an $k\times n$ matrix, $\la$ is a nonnegative parameter, $\|v\|_2$ denotes the Euclidian norm of $v$, and 
$\|v\|_1$ is the $\ell_1$ norm of $v$. Problems of the form (\ref{1.4}) have become familiar over the past three decades, particularly in statistical and signal processing contexts. Problem (\ref{1.4}) is closely related to the following convex constrained optimization problem
\begin{equation}\label{1.5}
\min_{x}\frac{1}{2}\|y-\Phi x\|_2^2 \quad\text{subject to}\quad \|x\|_1\le A.
\end{equation}
The problem (\ref{1.5}) is a convex optimization problem of the energy function $E(x,\Phi):=\frac{1}{2}\|y-\Phi x\|_2^2$ on the octahedron $\{x:\|x\|_1\le A\}$ in $\R^n$. The domain of optimization is simple and all dependence on the matrix $\Phi$ is in the energy function $E(x,\Phi)$, which makes the problem difficult. Also, the domain is in the high dimensional space $\R^n$. In typical applications, for instance in compressed sensing,  $k$ is much smaller than $n$. 
We recast the above problem as an optimization problem with $E(z):=\frac{1}{2}\|y-z\|_2^2$ over the domain $A_1(\D)$
 with respect to a dictionary $\D:=\{\pm\varphi_i\}_{i=1}^n$, which is associated with a $k\times n$ matrix
$\Phi=[\varphi_1\dots\varphi_n]$ with $\varphi_j\in \R^k$ being the column vectors of $\Phi$.
In this formulation the energy function $E(z)$ is very simple and all dependence on $\Phi$ is in the form of the domain $A_1(\D)$. Other important feature of the new formulation is that optimization takes place in the $\R^k$ with relatively small $k$. The simple form $E(z)=\frac{1}{2}\|y-z\|_2^2$ of the energy function shows that the minimization of $E(z)$ over $\Sigma_m(\D)$ is equivalent to $m$-sparse approximation of $y$ with respect to $\D$.
 The condition $y\in A_1(\D)$ is
equivalent to existence of $x\in \R^n$ such that $y=\Phi x$ and
\begin{equation}\label{1.6}
\|x\|_{1}:=|x_1|+\dots+|x_n| \le 1.
\end{equation}
As a direct corollary of Theorem \ref{T1.1} (see below), we get for any
$y\in A_1(\D)$ that the WCGA and the WGAFR with $\tau=\{t\}$ guarantee the
following upper bound for the error
\begin{equation}\label{1.7}
\|y_m\|_2\le Cm^{-1/2},
\end{equation}
where $y_m$ is the residual after $m$ iterations.
The bound (\ref{1.7}) holds for any $\D$ (any $\Phi$).

We concentrate on the study of the following optimization problem
\be\label{1.8}
\inf_{x\in D} E(x), \qquad D:=\{x:E(x)\le E(0)\}\subset \R^d.
\ee
It is clear that
$$
\inf_{x\in D} E(x) = \inf_{x\in \R^d} E(x).
$$
Therefore, the optimization problem (\ref{1.8}), which is formulated as a constrained optimization problem, is equivalent to the unconstrained optimization problem.
We apply greedy algorithms with respect to dictionaries with special properties and try to understand how these special properties effect dependence on $d$ of the convergence rate. \newline
{\bf 1.2. Greedy approximation in Banach spaces.} We begin with a brief description of greedy approximation methods in Banach spaces. The reader can find a detailed discussion of greedy approximation in the book \cite{Tbook}. 

A typical problem of sparse approximation is the following. Let $X$ be a Banach space with norm $\|\cdot\|$ and $\D$ be a set of elements of $X$. For a given $\D$ consider the set $\Sigma_m(\D)$ of all $m$-term linear combinations with respect to $\D$ ($m$-sparse with respect to $\D$)   defined above.
We are interested in approximation of a given $f\in X$ by elements of $\Sigma_m(\D)$.
The best we can do is
\begin{equation}\label{0.0}
\sigma_m(f,\D) := \inf_{x\in\Sigma_m(\D)}\|f-x\|.
\end{equation}
Greedy algorithms in approximation theory are designed to provide a simple way to build good approximants of $f$ from $\Sigma_m(\D)$. Clearly, problem (\ref{0.0}) is an optimization problem of $E_f(x):=\|f-x\|$ over the manifold $\Sigma_m(\D)$.

 For a nonzero element $f\in X$ we let $F_f$ denote a norming (peak) functional for $f$ that is a functional with the following properties 
$$
\|F_f\| =1,\qquad F_f(f) =\|f\|.
$$
The existence of such a functional is guaranteed by the Hahn-Banach theorem. The norming functional $F_f$ is a linear functional (in other words is an element of the dual to $X$ space $X^*$) which can be explicitly written in some cases. In a Hilbert space $F_f$ can be identified with $f\|f\|^{-1}$. In the real $L_p$, $1<p<\infty$, it can be identified with $f|f|^{p-2}\|f\|_p^{1-p}$. 
We describe a typical greedy algorithm which uses a norming functional. We call this  family of algorithms {\it dual greedy algorithms}. 
Let 
$t\in(0,1]$ be a given weakness parameter. We first  define the Weak Chebyshev Greedy Algorithm (WCGA) (see \cite{T15}) that is a generalization for Banach spaces of the Weak Orthogonal Greedy Algorithm (WOGA), which is known under the name Weak Orthogonal Matching Pursuit in signal processing.   

 {\bf Weak Chebyshev Greedy Algorithm (WCGA).} Let  $t\in(0,1]$ be a weakness  parameter.
We take $f_0\in X$. Then for each $m\ge 1$ we have the following inductive definition.

(1) $\varphi_m :=\varphi^{c,t}_m \in \D$ is any element satisfying
$$
|F_{f_{m-1}}(\varphi_m)| \ge t  \sup_{g\in\D}|F_{f_{m-1}}(g)|.
$$

(2) Define
$$
\Phi_m := \Phi^t_m := \sp \{\varphi_j\}_{j=1}^m,
$$
and define $G_m := G_m^{c,t}$ to be the best approximant to $f_0$ from $\Phi_m$.

(3) Let
$$
f_m := f^{c,t}_m := f_0-G_m.
$$
The index $c$ in the notation refers to Chebyshev. We use the name Chebyshev in this algorithm because at step (2) of the algorithm we use best approximation operator which bears the name of the {\it Chebyshev projection} or the {\it Chebyshev operator}. In the case of Hilbert space the Chebyshev projection is the orthogonal projection and it is reflected in the name of the algorithm. We use notation $f_m$ for the residual of the algorithm after $m$ iterations. This standard in approximation theory notation is justified by the fact that we interpret $f_0$ as a residual after $0$ iterations and iterate the algorithm replacing $f_0$ by $f_1$, $f_2$, and so on. In signal processing the residual after $m$ iterations is often denoted by $r_m$ or $r^m$.   

  For a Banach space $X$ we define the modulus of smoothness
$$
\rho(u) := \sup_{\|x\|=\|y\|=1}(\frac{1}{2}(\|x+uy\|+\|x-uy\|)-1).
$$
The uniformly smooth Banach space is the one with the property
$$
\lim_{u\to 0}\rho(u)/u =0.
$$
 
The following proposition is well-known (see, \cite{Tbook}, p.336).
\begin{Proposition}\label{P1.1} Let $X$ be a uniformly smooth Banach space. Then, for any $x\neq0$ and $y$ we have
\begin{equation}\label{1.9}
F_x(y)=\left(\frac{d}{du}\|x+uy\|\right)(0)=\lim_{u\to0}(\|x+uy\|-\|x\|)/u. 
\end{equation}
\end{Proposition}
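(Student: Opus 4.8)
The plan is to study the scalar function $\phi(u):=\|x+uy\|$, which is convex in the real variable $u$, and to show that its two one-sided derivatives at $u=0$ both exist and coincide with $F_x(y)$; this is exactly the asserted identity (\ref{1.9}). I may assume $y\neq0$, since for $y=0$ both sides vanish.

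First I would extract one-sided bounds from the norming functional. Because $\|F_x\|=1$ and $F_x(x)=\|x\|$, linearity of $F_x$ gives
$$
\|x+uy\|\ge F_x(x+uy)=\|x\|+uF_x(y),
$$
so $\|x+uy\|-\|x\|\ge uF_x(y)$ for every $u$. Dividing by $u>0$ and letting $u\to0^+$ yields $\phi'_+(0)\ge F_x(y)$, while dividing by $u<0$ (which reverses the inequality) and letting $u\to0^-$ yields $\phi'_-(0)\le F_x(y)$. By convexity of $\phi$ both one-sided derivatives exist and satisfy $\phi'_-(0)\le\phi'_+(0)$, so it suffices to prove the reverse inequality $\phi'_+(0)\le\phi'_-(0)$; then all of these quantities collapse onto $F_x(y)$.

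The one step where uniform smoothness is genuinely needed is bounding the gap between the one-sided derivatives. Writing them with positive increments, one has
$$
\phi'_+(0)-\phi'_-(0)=\lim_{u\to0^+}\frac{\|x+uy\|+\|x-uy\|-2\|x\|}{u}.
$$
Now set $a:=x/\|x\|$ and $b:=y/\|y\|$ (both of norm one) and $v:=u\|y\|/\|x\|$, so that $x\pm uy=\|x\|(a\pm vb)$. The definition of the modulus of smoothness gives $\|a+vb\|+\|a-vb\|\le 2(1+\rho(v))$, hence
$$
\|x+uy\|+\|x-uy\|-2\|x\|\le 2\|x\|\,\rho(v).
$$
Dividing by $u$ rewrites the right-hand side as $2\|y\|\,\rho(v)/v$, and uniform smoothness ($\rho(v)/v\to0$ as $v\to0^+$) forces the limit to be $0$. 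Thus $\phi'_+(0)=\phi'_-(0)$, and together with the sandwich from the previous step both equal $F_x(y)$, which is (\ref{1.9}).

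I expect the estimate in the last paragraph—controlling the symmetric second difference $\|x+uy\|+\|x-uy\|-2\|x\|$ by $\rho$—to be the crux, as this is precisely where the hypothesis of uniform smoothness is consumed; the convexity facts and the norming-functional inequality used earlier are routine.
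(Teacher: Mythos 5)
Your proof is correct and complete; the paper itself does not reprove Proposition \ref{P1.1} but cites \cite{Tbook}, p.~336, where essentially the same standard argument is given: convexity of $u\mapsto\|x+uy\|$ plus the norming-functional bound $\|x+uy\|\ge\|x\|+uF_x(y)$ sandwich the one-sided derivatives around $F_x(y)$, and the modulus-of-smoothness estimate on the symmetric difference $\|x+uy\|+\|x-uy\|-2\|x\|$ closes the gap between them. You have correctly identified that last step as the only place uniform smoothness is used.
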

Proposition \ref{P1.1} shows that in the WCGA we are looking for an element $\ff_m\in\D$ that provides a big derivative of the quantity $\|f_{m-1}+u\ff_m\|$. 
Here is one more important greedy algorithm. 

  {\bf Weak Greedy Algorithm with Free Relaxation  (WGAFR).} 
Let  $t\in(0,1]$ be a weakness  parameter. We define for $f_0  \in X$, $G_0  := 0$. Then for each $m\ge 1$ we have the following inductive definition.

(1) $\varphi_m   \in \D$ is any element satisfying
$$
|F_{f_{m-1}}(\varphi_m  )| \ge t   \sup_{g\in\D}|F_{f_{m-1}}(g)|.
$$

(2) Find $w_m$ and $ \lambda_m$ such that
$$
\|f-((1-w_m)G_{m-1} + \la_m\varphi_m)\| = \inf_{ \la,w}\|f-((1-w)G_{m-1} + \la\varphi_m)\|
$$
and define
$$
G_m:=   (1-w_m)G_{m-1} + \la_m\varphi_m.
$$

(3) Let
$$
f_m   := f-G_m.
$$
It is known that both algorithms WCGA and WGAFR converge in any uniformly smooth Banach space. The following theorem provides rate of convergence (see \cite{Tbook}, pp. 347, 353). 
\begin{Theorem}\label{T1.1} Let $X$ be a uniformly smooth Banach space with modulus of smoothness $\rho(u)\le \gamma u^q$, $1<q\le 2$. Take a number $\e\ge 0$ and two elements $f_0$, $f^\e$ from $X$ such that
$$
\|f_0-f^\e\| \le \e,\quad
f^\e/B \in A_1(\D),
$$
with some number $B=C(f,\e,\D,X)>0$.
Then, for both algorithms WCGA and WGAFR  we have  
$$
\|f_m\| \le  \max\left(2\e, C(t,q,\gamma)(B+\e) m^{-(q-1)/q}\right) . 
$$
\end{Theorem}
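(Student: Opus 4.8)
The plan is to treat both algorithms simultaneously, since in each case the residual obeys the same one-step inequality. First I would record the elementary consequence of uniform smoothness that for any $x\neq 0$, any $y$, and any scalar $\lambda$,
\[
\|x-\lambda y\| \le \|x\| - \lambda F_x(y) + 2\|x\|\,\rho\!\left(\frac{|\lambda|\,\|y\|}{\|x\|}\right),
\]
which follows by combining the convexity lower bound $\|x-\lambda y\|\ge \|x\|-\lambda F_x(y)$ (Proposition \ref{P1.1}) with the definition of $\rho$ applied to $\tfrac12(\|x+\lambda y\|+\|x-\lambda y\|)$. Next I would note that in both algorithms $G_m$ is selected so that $\|f_m\|\le \inf_\lambda\|f_{m-1}-\lambda\varphi_m\|$: for the WCGA because $G_{m-1}+\lambda\varphi_m\in\Phi_m$, and for the WGAFR by admitting the relaxation weight $w=0$. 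The same comparison with $\lambda=0$ (resp.\ $w=\lambda=0$) shows both produce monotone residuals $\|f_m\|\le\|f_{m-1}\|$.

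The crucial structural fact is the orthogonality relation $F_{f_{m-1}}(G_{m-1})=0$. For the WCGA it is simply the first-order optimality condition for the Chebyshev projection onto $\Phi_{m-1}$. For the WGAFR it is not immediate, and this is the step I expect to be the main obstacle: here I would extract it from the optimality of the \emph{previous} relaxation step, whose two first-order conditions (differentiate in $w$ and in $\lambda$) read $F_{f_{m-1}}(G_{m-2})=0$ and $F_{f_{m-1}}(\varphi_{m-1})=0$; since $G_{m-1}$ is a linear combination of $G_{m-2}$ and $\varphi_{m-1}$, the relation follows. Granting this, $F_{f_{m-1}}(f_0)=\|f_{m-1}\|$, and then, using $f^\e/B\in A_1(\D)$ together with $\|f_0-f^\e\|\le\e$, I would bound
\[
\sup_{g\in\D}|F_{f_{m-1}}(g)|\ge \frac{1}{B}\,|F_{f_{m-1}}(f^\e)| \ge \frac{1}{B}\big(\|f_{m-1}\|-\e\big).
\]
The weak greedy selection then gives $|F_{f_{m-1}}(\varphi_m)|\ge \frac{t}{B}(\|f_{m-1}\|-\e)$.

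Feeding this into the smoothness inequality with $\rho(u)\le\gamma u^q$ (using $\|\varphi_m\|\le 1$ and monotonicity of $\rho$) and optimizing over $\lambda>0$, with the sign chosen to match $F_{f_{m-1}}(\varphi_m)$, produces, whenever $\|f_{m-1}\|>\e$, the recursion
\[
\|f_m\| \le \|f_{m-1}\|\left[\,1 - c(t,q,\gamma)\left(\frac{\|f_{m-1}\|-\e}{B}\right)^{q/(q-1)}\,\right].
\]
The last step is to convert this into the stated rate. I would split on the threshold $2\e$: once $\|f_{m-1}\|\le 2\e$, monotonicity gives $\|f_m\|\le 2\e$, so the only interesting regime is $\|f_{m-1}\|>2\e$, where $\|f_{m-1}\|-\e\ge\|f_{m-1}\|/2$ and the recursion simplifies to $a_m\le a_{m-1}\big(1-c'(a_{m-1}/B)^{p}\big)$ with $p=q/(q-1)$. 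Setting $b_m=a_m/B$ and using $(1-s)^{-p}\ge 1+ps$ yields $b_m^{-p}\ge b_{m-1}^{-p}+pc'$; since monotonicity guarantees every earlier step also lies in this regime, telescoping gives $b_m^{-p}\ge pc'\,m$, hence $a_m\le C(t,q,\gamma)\,B\,m^{-(q-1)/q}\le C(t,q,\gamma)(B+\e)\,m^{-(q-1)/q}$. Combining the two regimes produces the claimed maximum.
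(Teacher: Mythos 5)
Your argument is correct and is essentially the standard proof of this theorem from the cited reference (\cite{Tbook}, pp.~347, 353): the same smoothness inequality as in (\ref{2.1})--(\ref{2.2}), the bound $\sup_{g\in\D}|F_{f_{m-1}}(g)|\ge B^{-1}(\|f_{m-1}\|-\e)$ via the biorthogonality $F_{f_{m-1}}(G_{m-1})=0$, and the same recursion solved by telescoping. In particular, you correctly identified and resolved the one delicate point, namely that for the WGAFR the relation $F_{f_{m-1}}(G_{m-1})=0$ comes from the two first-order optimality conditions of the free-relaxation step (equivalently, from $G_{m-1}$ being the Chebyshev projection of $f_0$ onto $\operatorname{span}\{G_{m-2},\varphi_{m-1}\}$).
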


The above Theorem \ref{T1.1} simultaneously takes care of two issues: noisy data and approximation in an interpolation space. In order to apply it for noisy data we interpret $f_0$ as a noisy version of a signal and $f^\e$ as a noiseless version of a signal. Then, assumption $f^\e/B\in A_1(\D)$ describes our smoothness assumption on the noiseless signal. Theorem \ref{T1.1} can be applied for approximation of $f_0$ under assumption that $f_0$ belongs  to one  
of interpolation spaces between $X$ and the space generated by the $A_1(\D)$-norm (atomic norm).   \newline
{\bf 1.3. Greedy algorithms for convex optimization.} In \cite{Tco1} we generalized the algorithms WCGA and WGAFR to the case of convex optimization and proved an analog of Theorem \ref{T1.1} for the new algorithms. Let us illustrate this on the generalization of the WGAFR.

We 
assume that the set
$$
D:=\{x:E(x)\le E(0)\}
$$
is bounded.
For a bounded set $D$ define the modulus of smoothness of $E$ on $D$ as follows
\begin{equation}\label{c1.1}
\rho(E,u):=\frac{1}{2}\sup_{x\in D, \|y\|=1}|E(x+uy)+E(x-uy)-2E(x)|.
\end{equation}
A typical assumption in convex optimization is of the form ($\|y\|=1$)
$$
|E(x+uy)-E(x)-\<E'(x),uy\>| \le Cu^2
$$
which corresponds to the case $\rho(E,u)$ of order $u^2$.
We assume that $E$ is Fr{\'e}chet differentiable. Then convexity of $E$ implies that for any $x,y$ 
\begin{equation}\label{c1.2}
E(y)\ge E(x)+\<E'(x),y-x\>
\end{equation}
or, in other words,
\begin{equation}\label{c1.3}
E(x)-E(y) \le \<E'(x),x-y\> = \<-E'(x),y-x\>.
\end{equation} 
The above assumptions that $E$ is a Fr{\'e}chet differentiable convex function and $D$ is a bounded domain guarantee that $\inf_xE(x) >-\infty$. 
We will use the following simple lemma.
\begin{Lemma}\label{L1.1} Let $E$ be Fr{\'e}chet differentiable convex function. Then the following inequality holds for $x\in D$
\begin{equation}\label{c1.6}
0\le E(x+uy)-E(x)-u\<E'(x),y\>\le 2\rho(E,u\|y\|).  
\end{equation}
\end{Lemma}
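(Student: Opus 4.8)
The plan is to establish the two inequalities in (\ref{c1.6}) separately, neither of which requires anything beyond convexity (\ref{c1.2}) and the definition (\ref{c1.1}) of the modulus of smoothness. For the lower bound I would simply invoke (\ref{c1.2}) with the base point $x$ and the target point $x+uy$: this gives $E(x+uy)\ge E(x)+\<E'(x),uy\>=E(x)+u\<E'(x),y\>$, which is exactly $E(x+uy)-E(x)-u\<E'(x),y\>\ge 0$. The cases $u=0$ and $y=0$ are trivial, so I may assume $y\neq 0$.

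For the upper bound the key device is to introduce the reflected point $x-uy$ and rewrite the quantity of interest as a symmetric second difference minus a nonnegative first-order term:
$$
E(x+uy)-E(x)-u\<E'(x),y\>=\bigl[E(x+uy)+E(x-uy)-2E(x)\bigr]-\bigl[E(x-uy)-E(x)+u\<E'(x),y\>\bigr].
$$
Applying convexity (\ref{c1.2}) now at the point $x$ with target $x-uy$ gives $E(x-uy)-E(x)\ge-u\<E'(x),y\>$, so the second bracket is nonnegative and may be discarded, leaving $E(x+uy)-E(x)-u\<E'(x),y\>\le E(x+uy)+E(x-uy)-2E(x)$.

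It then remains to match the surviving symmetric second difference to the modulus (\ref{c1.1}). Writing $z:=y/\|y\|$, so that $\|z\|=1$ and $uy=(u\|y\|)z$, the defining supremum in (\ref{c1.1}), evaluated at the admissible point $x\in D$, the unit vector $z$, and increment $u\|y\|$, yields $E(x+uy)+E(x-uy)-2E(x)\le 2\rho(E,u\|y\|)$, which is the asserted bound.

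I do not expect any genuine obstacle; the only point requiring a little care is the normalization needed to invoke (\ref{c1.1}), since the modulus is defined only for unit $y$, so the increment must be absorbed into the scalar $u\|y\|$ before comparing. It is also worth noting that the argument uses convexity at the base point $x$ in the two directions $+uy$ and $-uy$, but never requires $x\pm uy$ to lie in $D$; only the hypothesis $x\in D$ is used, precisely so that $x$ is an admissible base point in the supremum defining $\rho(E,\cdot)$.
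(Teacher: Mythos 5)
Your proof is correct. The paper states Lemma \ref{L1.1} without proof (it is the ``simple lemma'' imported from \cite{Tco1}), and your argument --- lower bound from convexity (\ref{c1.2}) at the base point $x$, upper bound by splitting off the symmetric second difference $E(x+uy)+E(x-uy)-2E(x)$, discarding the nonnegative term $E(x-uy)-E(x)+u\<E'(x),y\>$ via convexity in the direction $-uy$, and normalizing to match the definition (\ref{c1.1}) --- is exactly the standard proof given there.
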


  {\bf Weak Greedy Algorithm with Free Relaxation  (WGAFR(co)).} 
Let  $t\in(0,1]$ be a weakness  parameter. We define   $G_0  := 0$. Then for each $m\ge 1$ we have the following inductive definition.

(1) $\varphi_m   \in \D$ is any element satisfying
$$
|\<-E'(G_{m-1}),\varphi_m\>| \ge t  \sup_{g\in\D}|\<-E'(G_{m-1}),g\>|.
$$

(2) Find $w_m$ and $ \lambda_m$ such that
$$
E((1-w_m)G_{m-1} + \la_m\varphi_m) = \inf_{ \la,w}E((1-w)G_{m-1} + \la\varphi_m)
$$
and define
$$
G_m:=   (1-w_m)G_{m-1} + \la_m\varphi_m.
$$
We keep in algorithms for convex optimization notation $G_m$ ($G$ comes from Greedy) used in greedy approximation algorithms   to stress that an $m$th approximant $G_m$ is obtained by a greedy algorithm. Standard optimization theory notation for it is $x^{m}$. 
 The following theorem is from \cite{Tco1}.

\begin{Theorem}\label{T1.2} Let $E$ be a uniformly smooth convex function with modulus of smoothness $\rho(E,u)\le \gamma u^q$, $1<q\le 2$. Take a number $\e\ge 0$ and an element  $f^\e$ from $D$ such that
$$
E(f^\e) \le \inf_{x\in D}E(x)+ \e,\quad
f^\e/B \in A_1(\D),
$$
with some number $B=C(E,\e,\D)\ge 1$.
Then we have for the WGAFR(co)  
$$
E(G_m)-\inf_{x\in D}E(x) \le  \max\left(2\e, C_1(t,q,\gamma)B^q m^{1-q}\right) . 
$$
\end{Theorem}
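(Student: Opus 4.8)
The plan is to track the quantity $a_m := E(G_m)-E(f^\e)$ and to show it obeys a nonlinear recursion that unwinds to the rate $B^q m^{1-q}$; the threshold $2\e$ in the statement will appear by stopping the argument as soon as $a_m\le\e$, since then $E(G_m)-\inf_D E\le a_m+\e\le 2\e$ (using $E(f^\e)\le\inf_D E+\e$). Throughout I write $\phi:=-E'(G_{m-1})$ and note that every iterate lies in $D$: taking $w=\la=0$ in step (2) shows $E(G_m)\le E(G_{m-1})\le\dots\le E(G_0)=E(0)$.

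The first, and I expect the decisive, step is a structural identity produced by the free relaxation. Since $G_m=(1-w_m)G_{m-1}+\la_m\varphi_m$ minimizes the smooth convex function $(w,\la)\mapsto E((1-w)G_{m-1}+\la\varphi_m)$ over all of $\R^2$ (the infimum is attained because outside $D$ the value exceeds $E(G_{m-1})$), the two first-order conditions give $\<E'(G_m),G_{m-1}\>=0$ and $\<E'(G_m),\varphi_m\>=0$, hence $\<E'(G_m),G_m\>=0$. Together with $G_0=0$ this yields $\<\phi,G_{m-1}\>=0$ for every $m\ge1$. This orthogonality is exactly what lets the final bound depend on $B$ alone and not on the diameter of $D$: it permits the use of the plain candidate $G_{m-1}+\la\varphi_m$ (i.e.\ $w=0$) without paying a smoothness cost proportional to $\|G_{m-1}\|$.

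Next I would extract the greedy lower bound. Since $f^\e/B\in A_1(\D)=\overline{\operatorname{conv}}(\D^\pm)$, the supremum of $\phi$ over $A_1(\D)$ equals $\sup_{g\in\D}|\<\phi,g\>|$, so the weak-greedy choice gives $|\<\phi,\varphi_m\>|\ge t\sup_{g}|\<\phi,g\>|\ge \tfrac{t}{B}\<\phi,f^\e\>$. Combining $\<\phi,G_{m-1}\>=0$ with convexity (\ref{c1.3}) gives $\<\phi,f^\e\>=\<\phi,f^\e-G_{m-1}\>\ge E(G_{m-1})-E(f^\e)=a_{m-1}$, whence $|\<\phi,\varphi_m\>|\ge \tfrac{t}{B}a_{m-1}$. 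Now I apply Lemma \ref{L1.1} with $x=G_{m-1}$, $y$ a suitably signed unit multiple of $\varphi_m$, and step $\la$ (using $\|\varphi_m\|\le1$ and $\rho(E,u)\le\ga u^q$), and invoke the minimality of $G_m$, to obtain for every $\la\ge0$
$$
a_m \le a_{m-1}-\frac{t\la}{B}\,a_{m-1}+2\ga\la^q .
$$

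Finally I would optimize the right-hand side in $\la$. The interior minimizer is $\la_*\sim (t a_{m-1}/(\ga B))^{1/(q-1)}$, which turns the display into a recursion $a_m\le a_{m-1}-K a_{m-1}^{q/(q-1)}$ with $K\asymp \ga^{-1/(q-1)}(t/B)^{q/(q-1)}$; the standard lemma for such recursions then gives $a_m\lesssim K^{-(q-1)}m^{-(q-1)}=C_1(t,q,\ga)B^q m^{1-q}$. The routine bookkeeping is the edge case where $\la_*$ leaves the range in which $1-t\la/B\ge0$ (the first iterations): there one takes $\la=B/t$, gets $a_m\le 2\ga(B/t)^q$ in a single step, and re-enters the interior regime. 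I expect the genuine difficulty to lie entirely in the orthogonality identity of the second paragraph; once the $\<\phi,G_{m-1}\>$ term is eliminated, the remainder is the familiar Banach-space WGAFR computation with $\|\cdot\|$ replaced by $E(\cdot)-E(f^\e)$.
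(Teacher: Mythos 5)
Your proposal is correct, but note that the paper itself does not prove Theorem \ref{T1.2}: it is quoted from \cite{Tco1} without proof. Measured against what the paper does contain, your argument is sound, and its load-bearing step --- the first-order conditions of the free-relaxation step, giving $\langle E'(G_m),G_{m-1}\rangle=\langle E'(G_m),\varphi_m\rangle=0$ and hence $\langle E'(G_{m-1}),G_{m-1}\rangle=0$ for all $m$ --- is exactly the device that makes the constant $C_1(t,q,\gamma)$ free of the diameter of $D$, which is how the cited source proceeds. You are also right to justify attainment of the minimum in step (2) via boundedness of $D$ (the degenerate case where $G_{m-1}$ and $\varphi_m$ are linearly dependent still yields vanishing directional derivatives, so the orthogonality survives). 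The instructive comparison is with the paper's proof of Theorem \ref{T3.1}, which handles the same recursion $a_m\le a_{m-1}(1-a_{m-1}^{1/(q-1)}/B)$ by a different mechanism: there the term $\langle -E'(G_{m-1}),\varphi_m\rangle$ is bounded below by $t\beta\|E'(G_{m-1})\|_{X^*}$ and then by $t\beta\, a_{m-1}(2C_0)^{-1}$ via the diameter bound (\ref{3.11}), which requires $\beta(\D,X)>0$ and costs a factor depending on $C_0$; your orthogonality identity replaces that entire step by $\langle -E'(G_{m-1}),f^\e-G_{m-1}\rangle=\langle -E'(G_{m-1}),f^\e\rangle\ge a_{m-1}$ under the hypothesis $f^\e/B\in A_1(\D)$, with no assumption on $\D$ and no dependence on $C_0$. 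Two small points of hygiene, neither fatal and both consistent with the paper's own usage in Section 3.2: the bound $\rho(E,u)\le\gamma u^q$ is applied at points $G_{m-1}+\lambda y$ that may leave $D$ for the larger trial values of $\lambda$, and the recursion lemma (Lemma 2.16 of \cite{Tbook}) needs the initial condition $a_1^{1/(q-1)}\le B$, which is exactly what your one-step bound $a_m\le 2\gamma(B/t)^q$ supplies.
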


 We note that in all algorithms studied in this paper the sequence $\{G_m\}_{m=0}^\infty$ of approximants satisfies the conditions
 $$
 G_0=0,\quad E(G_0)\ge E(G_1) \ge E(G_2) \ge \dots .
 $$
 This guarantees that $G_m\in D$ for all $m$.

\section{Sparse approximation with respect to a dictionary}

{\bf 2.1. Exponential decay of errors.} We begin with showing that in the finite dimensional Banach space $X$ a number of greedy-type algorithms with respect to a finite dictionary 
provide exponential (linear) decay of error. From the definition of modulus of smoothness of $X$ we obtain for any $\la$
\begin{equation}\label{2.1}
 \|f_{m-1}-\la \ff_m\|+\|f_{m-1}+\la \ff_m\| \le 2\|f_{m-1}\|(1+\rho(\la/\|f_{m-1}\|)).
 \end{equation}
 Next, for $\la\ge 0$
 \begin{equation}\label{2.2}
 \|f_{m-1}+\la\ff_m\| \ge F_{f_{m-1}}(f_{m-1}+\la\ff_m)\ge \|f_{m-1}\| + \la F_{f_{m-1}}(\ff_m).
 \end{equation}
 By the definition of $\ff_m$ in dual-type weak greedy algorithms we have
  \begin{equation}\label{2.3}
 |F_{f_{m-1}}(\ff_m)| \ge t\sup_{g\in\D}|F_{f_{m-1}}(g)|.
 \end{equation}
 Then either
  \begin{equation}\label{2.3'}
 F_{f_{m-1}}(\ff_m) \ge t\sup_{g\in\D}|F_{f_{m-1}}(g)|
 \end{equation}
 or
 $$
 F_{f_{m-1}}(-\ff_m) \ge t\sup_{g\in\D}|F_{f_{m-1}}(g)|.
 $$
 We treat the case, when (\ref{2.3'}) holds. The other case is treated in the same way.
 
 We consider a finite dictionary $\D$ of $X$ being $\R^d$ equipped with the norm $\|\cdot\|:=\|\cdot\|_X$. Introduce the quantity
  \begin{equation}\label{2.4}
 \beta(\D,X):= \min_{F:\|F\|_{X^*}=1}\max_{g\in \D} |F(g)|.
 \end{equation}
 We note that $\beta(\D^\pm,X)=\beta(\D,X)$, where as above $\D^\pm := \{\pm g, g\in \D\}$.
 It is well known and easy to see that $\beta(\D,X)>0$. Indeed, $\max_{g\in\D}|F(g)|$ is a continuous function on $F$ and, therefore, it attains its minimum value on the compact set $\{F: \|F\|_{X^*}=1\}$. Let $\bt(\D,X)=\max_{g\in\D}|F_0(g)|$, $\|F_0\|_{X^*}=1$. Then, assuming $\bt(\D,X)=0$, we
 obtain that $F_0=0$, which is a contradiction. 
 
 Thus, we have
 $$
 F_{f_{m-1}}(\ff_m) \ge t\bt(\D,X) >0.
 $$
 For $\rho(u)\le \gamma u^q$, $q\in (1,2]$, we obtain from (\ref{2.1}) and (\ref{2.2})
  \begin{equation}\label{2.5}
 \|f-\la\ff_m\| \le \|f_{m-1}\| +2\ga \|f_{m-1}\|(\la/\|f_{m-1}\|)^q -\la t \bt, \quad \bt:=\bt(\D,X).
 \end{equation}
 Setting 
 $$
 \kappa:= \frac{1}{2}\left(\frac{t\bt}{4\ga}\right)^{\frac{1}{q-1}},\quad \la_1:= 2\ka \|f_{m-1}\|
 $$
 we get
  \begin{equation}\label{2.6}
 \|f_{m-1}-\la_1\ff_m\| \le \|f_{m-1}\|(1-\ka t\bt).
 \end{equation}
 Bound (\ref{2.6}) guarantees exponential (linear) decay for the following greedy-type algorithms   WCGA, WGAFR, defined above, and for the Weak Dual Greedy Algorithm (WDGA), $X$-Greedy Algorithm (XGA), defined in \cite{Tbook}, Ch.6 
 $$
 \|f_m\|\le (1-\ka t\bt)^m\|f_0\|.
 $$
 A very important issue is how $\bt(\D,X)$ depends on dimension $d$. Let us first consider the case of $\D=\{\pm e^j\}_{j=1}^d$ where $\{e^j\}_{j=1}^d$ is a canonical orthonormal basis for $X$ being the Euclidean space $\ell_2^d$. Then it is easy to see that $\bt(\D,\ell_2^d)=d^{-1/2}$ and the rate of convergence provided by (\ref{2.6}) is of order $(1- C(t)/d)^m$. 
 
 The error rate $(1-C/d)^m$ does not provide good approximation for small $m\le d$. However, there exists a different technique that provides reasonable error decay for small $m$ under extra assumptions on $f_0$.  Theorem \ref{T1.1} from Introduction provides such rate of convergence.   \newline
{\bf 2.2. Incoherent dictionaries.} In the discussion above we considered approximation with respect to an arbitrary dictionary $\D$. If we can choose a dictionary $\D$ satisfying certain conditions then we can improve on the error decay rate for small $m$. We give an example from \cite{T144}. 

{\bf A.} Let $\D:=\{g_i\}_{i=1}^\infty$. We say that $f=\sum_{i\in T}x_ig_i$, $g_i\in \D$, $i\in T$, has $\ell_1$ incoherence property with parameters $D$, $V$, and $r$ if for any $A\subset T$ and any $\Lambda$ such that $A\cap \Lambda =\emptyset$, $|A|+|\Lambda| \le D$ we have for any $\{c_i\}$
\begin{equation}\label{2.7}
\sum_{i\in A}|x_i| \le V|A|^r\|f_A-\sum_{i\in\Lambda}c_ig_i\|,\quad f_A:=\sum_{i\in A} x_ig_i.
\end{equation}
A dictionary $\D$ has $\ell_1$ incoherence property with parameters $K$, $D$, $V$, and $r$ if for any $A\subset B$, $|A|\le K$, $|B|\le D$ we have for any $\{c_i\}_{i\in B}$
$$
\sum_{i\in A} |c_i| \le V|A|^r\|\sum_{i\in B} c_ig_i\|.
$$

  \begin{Theorem}\label{T2.2} Let $X$ be a Banach space with $\rho(u)\le \gamma u^q$, $1<q\le 2$. Suppose $K$-sparse $f^\e$ satisfies {\bf A}  and $\|f_0-f^\e\|\le \e$. Then the WCGA with weakness parameter $t$ applied to $f_0$ provides $(q':=q/(q-1)$)
$$
\|f_{C(t,\gamma,q)V^{q'}\ln (VK) K^{rq'}}\| \le C\e\quad\text{for}\quad K+C(t,\gamma,q)V^{q'}\ln (VK) K^{rq'}\le D
$$
with an absolute constant $C$ and $C(t,\gamma,q) = C_2(q)\gamma^{\frac{1}{q-1}}  t^{-q'}$.
\end{Theorem}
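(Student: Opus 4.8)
The plan is to follow the exponential‑decay scheme of Section 2.1 but to discard the dimension‑dependent lower bound $F_{f_{m-1}}(\ff_m)\ge t\bt(\D,X)$ in favour of one that exploits the $K$‑sparsity of $f^\e$ and property {\bf A}; this is exactly what replaces $\bt$ by the incoherence quantity $VK^r$. The one fact I will need about the Chebyshev step is an orthogonality relation: since $G_{m-1}$ is the best approximant to $f_0$ from $\Phi_{m-1}=\sp\{\ff_j\}_{j=1}^{m-1}$, the function $\tau\mapsto\|f_{m-1}-\tau\phi\|$ is minimized at $\tau=0$ for every $\phi\in\Phi_{m-1}$, so Proposition \ref{P1.1} gives $F_{f_{m-1}}(\phi)=0$ for all $\phi\in\Phi_{m-1}$. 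Writing $\Gamma_{m-1}$ for the set of indices selected through step $m-1$ (so $\ff_j=\pm g_{i_j}$ and $|\Gamma_{m-1}|\le m-1$), this yields $F_{f_{m-1}}(g_i)=0$ for $i\in\Gamma_{m-1}$ and $F_{f_{m-1}}(G_{m-1})=0$.

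The heart of the argument is a lower bound for $S_m:=\sup_{g\in\D}|F_{f_{m-1}}(g)|$ valid whenever $\|f_{m-1}\|>2\e$. Using $\|F_{f_{m-1}}\|=1$, $\|f_0-f^\e\|\le\e$, the orthogonality above, and $f^\e=\sum_{i\in T}x_ig_i$ (so that $\|f_{m-1}\|=F_{f_{m-1}}(f_0)$),
\[
\|f_{m-1}\|\le F_{f_{m-1}}(f^\e)+\e=\!\!\sum_{i\in T\setminus\Gamma_{m-1}}\!\!x_iF_{f_{m-1}}(g_i)+\e\le S_m\!\!\sum_{i\in T\setminus\Gamma_{m-1}}\!\!|x_i|+\e .
\]
To control the $\ell_1$ mass on the right I apply property {\bf A} with $A:=T\setminus\Gamma_{m-1}$ and $\Lambda:=\Gamma_{m-1}$: these are disjoint, $|A|\le K$, and $|A|+|\Lambda|\le K+m-1\le D$ by the hypothesis $K+C(t,\ga,q)V^{q'}\ln(VK)K^{rq'}\le D$, so {\bf A} applies at every admissible step. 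Choosing the free coefficients $\{c_i\}_{i\in\Gamma_{m-1}}$ so that $f_A-\sum_{i\in\Gamma_{m-1}}c_ig_i=f^\e-G_{m-1}$ (possible since both $\sum_{i\in T\cap\Gamma_{m-1}}x_ig_i$ and $G_{m-1}$ lie in $\sp\{g_i:i\in\Gamma_{m-1}\}$) gives $\sum_{i\in A}|x_i|\le VK^r\|f^\e-G_{m-1}\|\le VK^r(\|f_{m-1}\|+\e)$. Combining the two estimates yields $\|f_{m-1}\|-\e\le S_mVK^r(\|f_{m-1}\|+\e)$, hence, for $\|f_{m-1}\|>2\e$,
\[
S_m\ge\frac{\|f_{m-1}\|-\e}{VK^r(\|f_{m-1}\|+\e)}\ge\frac{1}{3VK^r}.
\]

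With this in hand the remainder is the one‑step smoothness estimate exactly as in (\ref{2.1})--(\ref{2.6}): from $\rho(u)\le\ga u^q$, the WCGA selection $|F_{f_{m-1}}(\ff_m)|\ge tS_m$, and optimization in $\la$, one obtains
\[
\|f_m\|\le\|f_{m-1}\|\left(1-c(q)\ga^{-1/(q-1)}(tS_m)^{q'}\right)\le\|f_{m-1}\|(1-\eta),\qquad \eta:=c'(q)\ga^{-1/(q-1)}t^{q'}V^{-q'}K^{-rq'},
\]
as long as $\|f_{m-1}\|>2\e$; note $\eta^{-1}$ is precisely of the order $C(t,\ga,q)V^{q'}K^{rq'}$ recorded in the statement. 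Since $\{\|f_m\|\}$ is nonincreasing, iterating the contraction shows $\|f_m\|\le(1-\eta)^m\|f_0\|$ until the residual reaches level $2\e$, which happens within $\eta^{-1}\ln(\|f_0\|/(2\e))$ steps; the factor $\ln(VK)$ in the iteration count absorbs this logarithmic dynamic range, after which monotonicity gives $\|f_m\|\le C\e$.

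I expect the incoherence step — turning the orthogonality relation plus property {\bf A} into the clean bound $S_m\gtrsim(VK^r)^{-1}$ — to be the principal obstacle: one must pick $A$ and $\Lambda$ so that they are disjoint and respect the budget $|A|+|\Lambda|\le D$ (this is exactly where the assumption on $D$, and hence the admissible range of $m$, is consumed), and one must realize $f^\e-G_{m-1}$ through the free coefficients. The only remaining delicacy is the counting in the last step, where $\ln(VK)$ emerges as the number of geometric‑contraction steps needed to drive $\|f_0\|$ down to the noise level $2\e$.
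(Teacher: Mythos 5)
First, a point of reference: the paper does not prove Theorem \ref{T2.2} at all --- it is quoted verbatim from \cite{T144} --- so there is no in-paper argument to compare against, and your proposal has to stand on its own. Its first two thirds do: the annihilation relation $F_{f_{m-1}}(\phi)=0$ for $\phi\in\Phi_{m-1}$ (valid in a uniformly smooth space because $G_{m-1}$ is a best approximant from $\Phi_{m-1}$), the resulting chain $\|f_{m-1}\|=F_{f_{m-1}}(f_0)\le F_{f_{m-1}}(f^\e)+\e\le S_m\sum_{i\in T\setminus\Gamma_{m-1}}|x_i|+\e$, and the use of property {\bf A} with $A=T\setminus\Gamma_{m-1}$, $\Lambda=\Gamma_{m-1}$ and the free coefficients chosen to realize $f^\e-G_{m-1}$, giving $S_m\ge(3VK^r)^{-1}$ whenever $\|f_{m-1}\|>2\e$. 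This is exactly the machinery of \cite{T144}, and the resulting one-step contraction $\|f_m\|\le(1-\eta)\|f_{m-1}\|$ with $\eta\sim c(q)\ga^{-1/(q-1)}t^{q'}(VK^r)^{-q'}$ is correct.

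The gap is the final counting step. A uniform geometric contraction drives the residual to level $2\e$ only after $m\gtrsim\eta^{-1}\ln\bigl(\|f_0\|/(2\e)\bigr)$ iterations, and this dynamic range is \emph{not} controlled by $\ln(VK)$: it depends on $f_0$ and $\e$, can be arbitrarily large, and is infinite when $\e=0$ --- yet the theorem asserts $\|f_{m^*}\|\le C\e$ (exact recovery when $\e=0$) after a number of iterations depending only on $t,\ga,q,V,r,K$. So the sentence ``the factor $\ln(VK)$ absorbs this logarithmic dynamic range'' is precisely where the proof fails, and it fails at the genuinely hard point of the theorem. The missing idea is to exploit the finiteness of $T$: once $T\subset\Gamma_m$, the annihilation relation gives $F_{f_m}(f^\e)=0$ and hence $\|f_m\|=F_{f_m}(f_0-f^\e)\le\e$ at once, so the real task is to show that the WCGA captures $T$ (or all but a negligible part of it) within the stated budget. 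In \cite{T144} this is done by a bucketing/doubling argument on the coefficients of $f^\e$: one orders $T$ by $|x_i|$, splits it into dyadic blocks, and shows that $O\bigl(V^{q'}(2^j)^{rq'}\ln(VK)\bigr)$ iterations suffice to absorb the $j$-th block into $\Gamma_m$; summing over $j\le\log_2K$ yields $V^{q'}K^{rq'}\ln(VK)$, with the logarithm measuring the within-block contraction needed to drop to the scale of the next block --- not the ratio $\|f_0\|/\e$. Your derivation of $S_m\ge(3VK^r)^{-1}$ would serve as the base estimate inside such a scheme, but as written the argument does not reach the stated conclusion.
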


In the above setting we have the following trade off of different features of the dictionary $\D$. Clearly, parameter $\bt(\D,X)$ depends on $\D$. We may expect that for good dictionaries $\bt(\D,X)$ grows with the growth of the cardinality of $\D$. Also, the set $A_1(\D)$, for which we can guarantee good rate of convergence for small $m$, grows when cardinality of $\D$ increases. These are arguments for choosing a large dictionary $\D$ for approximation. However, really large dictionaries, say, with cardinality exponential in $d$, are not manageable computationally. Therefore, a reasonable compromise seems to be to find good dictionaries with cardinality polynomial in $d$. We discuss this in detail on the example of the Euclidean space $\ell_2^d$. It is known that dictionaries with small coherence are good for sparse approximation. Also, there are different explicit deterministic constructions of large dictionaries with small coherence. We now present some known results on incoherent dictionaries (for more detail see \cite{Tbook}, Section 5.7). 

  Let $\D=\{g^k\}_{k=1}^N$ be a normalized ($\|g^k\|=1$, $k=1,\dots,N$) system of vectors in $\R^d$ or ${\mathbb C}^n$ equipped with the Euclidean norm.
We define the coherence parameter of the dictionary $\D$ as follows
$$
M(\D) := \sup_{k\neq l} |\<g^k,g^l\>|.
$$
In this section we discuss the following   characteristics
$$
N(d,\mu) := \sup\{N:\exists \D \quad\text{such that} \quad |\D| \ge N, M(\D)\le\mu\}.
$$

The problem of studying $N(d,\mu)$ is equivalent to a fundamental problem of information theory. It is a problem on optimal spherical codes. A spherical code ${\mathcal S}(d,N,\mu)$ is a set of $N$ points (code words) on the $d$-dimensional unit sphere, such that the absolute values of inner products between any two code words is not greater than $\mu$. The problem is to find the largest $N^*$ such that the spherical code  ${\mathcal S}(d,N^*,\mu)$ exists. It is clear that $N^*=N(d,\mu)$. Here is the known results on the behavior of $N(d,\mu)$ (see \cite{Tbook}, p.316).

\begin{Theorem} \label{T2.3}There exist two positive constants $C_1$ and $C_2$ such that for $\mu\in [(2d)^{-1/2},1/2]$ we have 
$$
C_2\exp(d\mu^2/2)\le N(d,\mu)\le \exp(C_1d\mu^2\ln(2/\mu)).
$$
\end{Theorem}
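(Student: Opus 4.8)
The statement packages a lower and an upper bound on the maximal size of a spherical code, and the two directions are proved by completely different techniques; the lower bound is a routine covering argument, whereas the upper bound is the genuinely hard direction. For the lower bound $N(d,\mu)\ge C_2\exp(d\mu^2/2)$ I would argue by maximality. Let $\D=\{g^k\}_{k=1}^N$ be a maximal system of unit vectors with $|\langle g^k,g^l\rangle|\le\mu$ for $k\ne l$, i.e.\ one to which no further unit vector can be adjoined. Maximality means that every $x\in S^{d-1}$ satisfies $|\langle x,g^k\rangle|>\mu$ for some $k$, so the symmetric caps $\{x:|\langle x,g^k\rangle|>\mu\}$ cover the sphere. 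Invoking the standard upper estimate for the normalized surface measure of a spherical cap, $\sigma(\{x:\langle x,e\rangle>\mu\})\le\tfrac12\exp(-(d-1)\mu^2/2)$, each symmetric cap has measure at most $\exp(-(d-1)\mu^2/2)$, and the covering inequality $N\exp(-(d-1)\mu^2/2)\ge1$ yields $N\ge\exp((d-1)\mu^2/2)\ge e^{-1/8}\exp(d\mu^2/2)$ on the range $\mu\le1/2$. This gives $C_2=e^{-1/8}$, and the exponent $d\mu^2/2$ is exactly the decay rate of the cap measure.

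The upper bound is the crux. The naive packing estimate -- placing disjoint caps of angular radius $\tfrac12\arccos\mu$ around the points $\{\pm g^k\}$ -- only produces $N\le\exp(Cd)$, with no dependence on $\mu$, and is hopelessly weak when $\mu$ is small. The device I would use to recover the factor $\mu^2\ln(2/\mu)$ is the tensor power trick: replace each $g^k$ by its $s$-fold tensor power $(g^k)^{\otimes s}$, which is a unit vector lying in the symmetric subspace $\mathrm{Sym}^s(\R^d)$ of dimension $D_s=\binom{d+s-1}{s}$, and whose pairwise inner products are $\langle g^k,g^l\rangle^s$, hence bounded by $\mu^s$ in absolute value. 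Tensoring drives the coherence down to $\mu^s$ while only polynomially inflating the dimension, pushing the problem into the near-orthogonal regime where a sharp linear-algebraic bound is available.

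Concretely, for unit vectors in $\R^{D}$ with pairwise coherence $\le\nu$ the trace/Cauchy--Schwarz (Welch) inequality, applied to the rank-$\le D$ Gram matrix $G=I+E$ via $N^2=(\operatorname{tr}G)^2\le D\operatorname{tr}(G^2)\le D(N+N(N-1)\nu^2)$, gives $N\le D/(1-D\nu^2)$, so $N\le2D$ as soon as $D\nu^2\le\tfrac12$. Applying this with $\nu=\mu^s$ and $D=D_s$, I would choose $s=\lceil 4ed\mu^2\rceil$; then $\binom{d+s-1}{s}\le(2ed/s)^s$ forces $D_s\mu^{2s}\le(2ed\mu^2/s)^s\le2^{-s}\le\tfrac12$, whence $N\le2D_s$. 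Taking logarithms and using $s\approx d\mu^2$ together with $\ln(ed/s)\approx2\ln(1/\mu)$ turns $\ln(2D_s)\approx s\ln(ed/s)$ into $C_1d\mu^2\ln(2/\mu)$, the asserted bound. The choice of $s$ is legitimate only when $s\le d$, i.e.\ for $\mu$ below a small absolute constant $\mu_0$; for $\mu\in[\mu_0,1/2]$ the quantity $d\mu^2\ln(2/\mu)$ is already of order $d$, so the crude packing bound $N\le\exp(Cd)$ suffices. The main obstacle is thus entirely in the upper bound: finding the right mechanism -- here tensorization followed by the Welch bound -- and tuning the tensor order $s\asymp d\mu^2$ against the binomial estimate so that the exponent comes out to precisely $d\mu^2\ln(2/\mu)$.
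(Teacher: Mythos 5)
Your argument is correct, and it is essentially the standard proof of this result: the paper itself states Theorem \ref{T2.3} without proof, citing \cite{Tbook} (p.~316) and ultimately \cite{NT}, where the lower bound is obtained exactly as you do (maximality of the code plus the spherical-cap measure estimate) and the upper bound is Alon's rank argument, phrased there via Hadamard powers of the Gram matrix rather than tensor powers of the vectors --- the same device as yours. Your handling of the two edge cases (absorbing the $s\le d+1$ constraint into the regime $\mu\le\mu_0$ and falling back on the volumetric packing bound for $\mu\in[\mu_0,1/2]$, where $d\mu^2\ln(2/\mu)\asymp d$) is exactly what is needed to make the choice $s\asymp d\mu^2$ legitimate on the whole stated range of $\mu$.
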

A very interesting and difficult problem is to provide an explicit (deterministic) construction of a large system with small coherence. We addressed this problem in \cite{NT}. In \cite{NT} (see also \cite{Tbook}, Section 5.7) we used Weil's sums to construct a system $\D$ in ${\mathbb C}^d$ with $M(\D)\le \mu$ of 
cardinality of order $\exp(\mu d^{1/2}\ln d)$. We note that Kashin \cite{Ka3} constructed a system $\D$ in $\bR^d$ with $M(\D)\le\mu$ of cardinality of order  $\exp(C\mu d^{1/2}\ln d)$ using symbols of Legendre. Similar results have been obtained in \cite{GMS} with combinatorial designs and in \cite{D3} with the finite fields technique.

Theorem \ref{T2.3} shows that if we are interested in maximal cardinality of dictionaries with the coherence parameter $\mu$ then the polynomial in $d$ constraint on the dictionary cardinality implies $\mu \le C\left(\frac{\ln d}{d}\right)^{1/2}$. The discussion after Theorem \ref{T2.3} shows that we can build dictionaries $\D$ with polynomial in $d$ cardinality with $M(\D)\le Cd^{-1/2}$. Assume that $\D$ is such a dictionary. Then known results on the Lebesgue-type inequalities (see \cite{Tbook}, Section 2.6) for the OGA and the WOGA (see \cite{Zhang}) provide
\begin{equation}\label{2.8} 
\|f_{C_1m}\|_2 \le C_2\sigma_m(f_0,\D)_2,\qquad m\le C_3 M(\D)^{-1},\quad M(\D)^{-1}\ge C^{-1} d^{1/2}.
\end{equation}
Thus for the number of iterations $m$ up to $O(d^{1/2})$ the WOGA works almost ideally and provides near best $m$-term approximation. If, in addition, $f_0\in A_1(\D)$ then for all $m$
$$
\|f_m\|_2 \le C(t)m^{-1/2},
$$
which we can use for $m>C_4d^{1/2}$. Further, if we assume that $\D=\D(\mu)$ is the extremal dictionary for a given $\mu$: $M(\D)\le\mu$ and $|\D(\mu)|=N(d,\mu)$ then, as in subsection 2.1, we can guarantee the exponential decay of the error. Indeed, our assumption that $\D(\mu)$ is an extremal dictionary for $\mu$ implies that for any $x\in \ell_2^d$ there is $g^k\in\D(\mu)$ such that $|\<x/\|x\|_2,g^k\>|>\mu$. This means that
$$
\bt(\D(\mu),\ell_2^d)\ge \mu=Cd^{-1/2}.
$$
Therefore, for any of the algorithms mentioned above we obtain
$$
\|f_m\|_2\le (1-C/d)^{m-n}\|f_n\|_2,\qquad 1\le n\le m.
$$
We note that for the WOGA the residual $f_d$ is equal to zero. For large $m$ the WOGA is more difficult computationally then the Weak Greedy Algorithm (WGA) (see \cite{Tbook}, Ch. 2). Therefore, we propose the following strategy. For the dictionary $\D(\mu)$ apply the WOGA for $n\le C_3C^{-1}d^{1/2}$ iterations. It will give a near optimal error:
$$
\|f_n\|_2\le C_2\sigma_{n/C_3}(f_0,\D(\mu)_2.
$$
Then switch to the WGA and get the error
$$
\|f_m\|_2 \le C'\sigma_{n/C_3}(f_0,\D(\mu)_2(1-C/d)^m.
$$
{\bf 2.3. Dictionaries associated with covering.} We now discuss dictionaries, which are related to covering a unit ball $B_2^d$ by balls of radius $r<1$. The reader can find some related results for finite dimensional Banach spaces in \cite{T6}.
Let  $X$ be a Banach space $\R^d$ with a norm $\|\cdot\|$ and let $B:=B_X$ denote the corresponding closed unit ball:
\begin{equation}\label{2.9}
B:=B_X:=\{x\in \R^d:\|x\|\le 1\}.
\end{equation}
Notation $B(x,r):=B_X(x,r)$  will be used respectively for closed  balls with the center $x$ and radius $r$. In case $r=1$ we drop it from the notation: $B(x):=B(x,1)$. In the case $X=\ell_2^d$ we write $B_2^d:=B_{\ell_2^d}$. Also, define $S^{d-1}:=\{x\in \ell_2^d: \|x\|_2=1\}$ the unit sphere. For a compact set $A$ and a positive number $\e$ we define the covering number $N_\e(A)$
 as follows
$$
N_\e(A) := N_\e(A,X) 
:=\min \{n \,|\, \exists x^1,\dots,x^n : A\subseteq \cup_{j=1}^n B_X(x^j,\e)\}.
$$
The following proposition is well known.
\begin{Proposition}\label{P2.1} For any $d$-dimensional Banach space $X$ we have
$$
\e^{-d} \le N_\e(B_X,X) \le (1+2/\e)^d.
$$
\end{Proposition}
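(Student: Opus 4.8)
The plan is to derive both inequalities from a single elementary tool: comparison of Lebesgue volumes in $\R^d$. The key fact I would use repeatedly is that the unit ball $B_X$ of a $d$-dimensional normed space is a bounded convex body with finite, strictly positive Lebesgue measure, and that by translation invariance and the dilation behavior of Lebesgue measure one has $\operatorname{vol}(B_X(x,r)) = r^d\operatorname{vol}(B_X)$ for every center $x\in\R^d$ and radius $r>0$. With this in hand, neither direction requires any special feature of the norm beyond homogeneity of the balls.

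For the lower bound, suppose $B_X\subseteq \bigcup_{j=1}^n B_X(x^j,\e)$ with $n=N_\e(B_X,X)$. Taking volumes and using subadditivity together with the scaling law gives
$$
\operatorname{vol}(B_X)\le \sum_{j=1}^n \operatorname{vol}(B_X(x^j,\e)) = n\,\e^d\operatorname{vol}(B_X).
$$
Dividing by $\operatorname{vol}(B_X)>0$ yields $1\le n\e^d$, i.e. $n\ge \e^{-d}$.

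For the upper bound, I would pass to a maximal $\e$-separated subset $\{x^1,\dots,x^n\}$ of $B_X$, that is, a set of points of $B_X$ that are pairwise at distance at least $\e$ and to which no further point of $B_X$ can be added without violating separation; such a finite set exists by compactness of $B_X$. Maximality forces the balls $B_X(x^j,\e)$ to cover $B_X$, so $N_\e(B_X,X)\le n$. To bound $n$ I would instead examine the half-radius balls $B_X(x^j,\e/2)$: separation makes them pairwise disjoint, and since each $x^j\in B_X$ they are all contained in $B_X(0,1+\e/2)$. Adding their volumes gives
$$
n\,(\e/2)^d\operatorname{vol}(B_X) = \sum_{j=1}^n\operatorname{vol}(B_X(x^j,\e/2)) \le \operatorname{vol}(B_X(0,1+\e/2)) = (1+\e/2)^d\operatorname{vol}(B_X),
$$
and dividing out $\operatorname{vol}(B_X)$ leaves $n\le (1+\e/2)^d/(\e/2)^d = (1+2/\e)^d$.

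The argument is essentially routine, so I do not expect a genuine obstacle; the only points that need care are the justification of $\operatorname{vol}(B_X(x,r))=r^d\operatorname{vol}(B_X)$ (which rests on translation invariance and the dilation behavior of Lebesgue measure, together with $0<\operatorname{vol}(B_X)<\infty$ for a convex body with nonempty interior), and the disjointness step in the upper bound, where one must pass to radius $\e/2$ rather than $\e$ so that the enlarged balls do not overlap. The packing-versus-covering duality encapsulated by the maximal separated set is the one mildly clever ingredient.
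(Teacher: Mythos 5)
Your argument is correct, and it is the standard volume-comparison proof of this covering/packing estimate; the paper itself offers no proof, simply labelling the proposition as well known, so there is nothing to compare against beyond noting that your route is the canonical one. Both halves are sound: the lower bound follows from subadditivity of Lebesgue measure together with $\operatorname{vol}(B_X(x,r))=r^d\operatorname{vol}(B_X)$, and the upper bound from the maximal $\e$-separated set, whose maximality forces covering and whose separation makes the half-radius balls essentially disjoint. The only point worth a word of care is that with closed balls and separation \emph{at least} $\e$, two balls $B_X(x^i,\e/2)$ and $B_X(x^j,\e/2)$ with $\|x^i-x^j\|=\e$ may share boundary points; but any common point must lie on the boundary sphere of each ball, a set of measure zero, so the volume additivity you invoke still holds.
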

We begin with two simple lemmas. 
\begin{Lemma}\label{L2.1} Let $\cup_{j=1}^N B^d_2(x^j,r)$ be a covering of $S^{d-1}$:
$$
S^{d-1}\subset \cup_{j=1}^N B^d_2(x^j,r).
$$
Then for $\D:=\{g^j\}_{j=1}^N$, $g^j:=x^j/\|x^j\|_2$, we have
$$
\bt:=\bt(\D,\ell_2^d) \ge (1-r^2)^{1/2}.
$$
\end{Lemma}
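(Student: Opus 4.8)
The plan is to first translate $\beta(\D,\ell_2^d)$ into the self-dual language of $\ell_2^d$. Since $(\ell_2^d)^*$ is isometric to $\ell_2^d$ via the inner product, every norm-one functional $F$ is of the form $F(\cdot)=\langle x,\cdot\rangle$ with $\|x\|_2=1$, so that by the definition (\ref{2.4})
$$
\beta(\D,\ell_2^d)=\min_{\|x\|_2=1}\ \max_{1\le j\le N}|\langle x,g^j\rangle|.
$$
Hence it suffices to show that for an arbitrary unit vector $x\in S^{d-1}$ there exists an index $j$ with $|\langle x,g^j\rangle|\ge (1-r^2)^{1/2}$, and then to take the minimum over $x$.

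Next I would bring in the covering hypothesis. Fix $x\in S^{d-1}$. Because the balls $B_2^d(x^j,r)$ cover $S^{d-1}$, there is a $j$ with $\|x-x^j\|_2\le r$. First note that $x^j\ne 0$: by the triangle inequality $\|x^j\|_2\ge \|x\|_2-\|x-x^j\|_2\ge 1-r>0$, so the normalization $g^j=x^j/\|x^j\|_2$ is well defined. Now set $a:=\|x^j\|_2$ and $c:=\langle x,x^j\rangle$. Expanding the squared distance and using $\|x\|_2=1$ gives
$$
r^2\ge\|x-x^j\|_2^2=1-2c+a^2,
$$
so that $c\ge \tfrac12(1+a^2-r^2)>0$. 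Dividing by $a>0$ yields
$$
\langle x,g^j\rangle=\frac{c}{a}\ge\frac{1+a^2-r^2}{2a}=\frac{1-r^2}{2a}+\frac{a}{2}.
$$

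Finally I would minimize the right-hand side over the free parameter $a>0$. By the arithmetic--geometric mean inequality the quantity $\frac{1-r^2}{2a}+\frac{a}{2}$ is at least $(1-r^2)^{1/2}$ for every $a>0$, with equality at $a=(1-r^2)^{1/2}$. Since $\langle x,g^j\rangle>0$, we may drop the absolute value and conclude $|\langle x,g^j\rangle|\ge (1-r^2)^{1/2}$; as $x$ was an arbitrary unit vector, taking the minimum over $x$ gives $\beta(\D,\ell_2^d)\ge (1-r^2)^{1/2}$.

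I do not expect a genuine obstacle here, as the argument reduces to one application of AM--GM. The only points requiring care are the two bookkeeping checks I flagged above: verifying $x^j\ne 0$ so that $g^j$ is defined (which follows from $r<1$ via the triangle inequality), and confirming that $c>0$ so that the sign is nonnegative and the absolute value may be removed. Everything else is a routine expansion of $\|x-x^j\|_2^2$ followed by the elementary minimization over $a$.
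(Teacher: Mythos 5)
Your proof is correct and follows essentially the same route as the paper: fix a unit vector $x$, use the covering to find $x^j$ with $\|x-x^j\|_2\le r$, and extract the lower bound on $|\langle x,g^j\rangle|$. The only difference is the final algebra — the paper projects $x$ onto $\mspan\{g^j\}$ and uses $\|x-\langle x,g^j\rangle g^j\|_2\le\|x-x^j\|_2$ together with the Pythagorean identity, whereas you expand $\|x-x^j\|_2^2$ and minimize over $a=\|x^j\|_2$ via AM--GM; these are equivalent computations, and your added checks that $x^j\neq 0$ and $\langle x,x^j\rangle>0$ are sound.
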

\begin{proof} Take an arbitrary $x\in S^{d-1}$. Then there exists $x^k$, $1\le k\le N$, such that 
$\|x-x^k\|_2 \le r$. Further, 
$$
\|x-\<x,g^k\>g^k\|_2 \le \|x-x^k\|_2 \le r
$$
and
$$
r^2 \ge \|x-\<x,g^k\>g^k\|_2^2 = \|x\|^2 - \<x,g^k\>^2.
$$
This implies
$$
\<x,g^k\>^2 \ge 1-r^2,
$$
which completes the proof of Lemma \ref{L2.1}.
\end{proof}

\begin{Lemma}\label{L2.2} Let $\D=\{g^j\}_{j=1}^N$ have $\bt:=\bt(\D,\ell_2^d)\le 2^{-1/2}$. Then 
$$
\left(\cup_{j=1}^N B^d_2(x^j,r)\right)\cup\left(\cup_{j=1}^N B^d_2(-x^j,r)\right) 
$$ 
with $x^j:=\bt g^j$ and $r^2=1-\bt^2$ is a covering of $B^d_2$. 
\end{Lemma}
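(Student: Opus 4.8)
The plan is to use the defining minimax property of $\bt$ in a Hilbert space to locate, for each point of $B^d_2$, a dictionary element pointing roughly in its direction, and then to verify by a direct expansion of the squared Euclidean norm that the corresponding ball of radius $r$ contains that point. The first thing I would do is put the definition of $\bt=\bt(\D,\ell_2^d)$ into usable form. Since $\ell_2^d$ is a Hilbert space, the Riesz representation identifies each norm-one functional with a unit vector, so that $\bt=\min_{\|v\|_2=1}\max_j|\<v,g^j\>|$. Because $\bt$ is the minimum, every unit vector $v$ satisfies $\max_j|\<v,g^j\>|\ge\bt$; equivalently, for each $v\in S^{d-1}$ there are an index $j$ and a sign $\sigma\in\{+1,-1\}$ with $\sigma\<v,g^j\>\ge\bt$.

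Next I would take an arbitrary $y\in B^d_2$. If $y=0$, then $\|0-\bt g^j\|_2\le\bt\le(1-\bt^2)^{1/2}=r$, the last inequality being exactly the hypothesis $\bt\le 2^{-1/2}$, so $y$ already lies in one of the listed balls. If $y\neq0$, I would apply the previous step to $v=y/\|y\|_2$ to obtain $j$ and $\sigma$ with $\sigma\<y,g^j\>=|\<y,g^j\>|\ge\bt\|y\|_2$. With $x^j=\bt g^j$ and using $\|g^j\|_2\le1$ together with this lower bound, the expansion gives
$$
\|y-\sigma x^j\|_2^2=\|y\|_2^2-2\bt|\<y,g^j\>|+\bt^2\|g^j\|_2^2\le\|y\|_2^2-2\bt^2\|y\|_2+\bt^2.
$$

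It then remains to check that the right-hand side does not exceed $r^2=1-\bt^2$. Writing $s:=\|y\|_2\in[0,1]$, this amounts to the one-variable inequality $\phi(s):=s^2-2\bt^2 s+2\bt^2\le1$. Since $\phi$ is convex, its maximum over $[0,1]$ is attained at an endpoint, and $\phi(1)=1$ automatically while $\phi(0)=2\bt^2\le1$ precisely because $\bt\le 2^{-1/2}$; hence $\phi(s)\le1$ throughout $[0,1]$. This yields $y\in B^d_2(\sigma x^j,r)$, so $y$ lies in one of the balls $B^d_2(\pm x^j,r)$, and since $y$ was arbitrary the indicated union covers $B^d_2$.

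The only genuine subtlety is the first step: extracting from the minimax definition of $\bt$ the correct statement that in every direction there is a well-aligned dictionary element, together with the right choice of sign $\sigma$ so that both the $x^j$ and the $-x^j$ balls are genuinely used. Everything after that is a routine expansion of a Euclidean norm, and the hypothesis $\bt\le 2^{-1/2}$ enters at exactly one place, namely the endpoint check $\phi(0)=2\bt^2\le1$ (which is also what guarantees that the origin is covered).
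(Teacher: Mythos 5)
Your proof is correct and follows essentially the same route as the paper: use the minimax definition of $\bt$ to find a sign $\sigma$ and index $j$ with $\sigma\<y,g^j\>\ge\bt\|y\|_2$, then expand $\|y-\sigma\bt g^j\|_2^2$ and use $\bt\le 2^{-1/2}$ to bound it by $1-\bt^2$. The paper parametrizes points as $\al x$ with $x\in S^{d-1}$ and bounds $\al^2-2\al\bt^2+\bt^2$ by the chain $\al^2\le\al$ and $\al(1-2\bt^2)\le 1-2\bt^2$, which is the same endpoint estimate you obtain from convexity of $\phi(s)=s^2-2\bt^2s+2\bt^2$.
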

\begin{proof} Take any $x\in S^{d-1}$. It follows from the definition of $\bt$ that there exists $g^k\in \D$ such that $|\<x,g^k\>| \ge \bt.$ Then either $\<x,g^k\> \ge \bt$ or $\<x,-g^k\> \ge \bt$. We treat the case $\<x,g^k\> \ge \bt$. The other case is treated in the same way. 
For $\al\in[0,1]$ estimate
$$
\|\al x-\bt g^k\|_2^2 = \al^2 -2\al\bt\<x,g^k\> +\bt^2 \le \al^2 -2\al\bt^2 + \bt^2
$$
$$
\le \al -2\al\bt^2 + \bt^2 = \al(1-2\bt^2)+2\bt^2 -\bt^2 \le 1-\bt^2. 
$$
Thus the segment $[0,x]$ is covered by $B_2^d(\bt g^k,r)$. This completes the proof of Lemma \ref{L2.2}. 
\end{proof}
We now use the above lemmas and known results on covering to find the right behavior of $\bt(\D,\ell_2^d)$ for dictionaries of polynomial in $d$ cardinality. Define for $a\ge 1$
$$
\bt(a):= \sup_{\D:|\D|\le d^a}\bt(\D,\ell_2^d).
$$
\begin{Theorem}\label{T2.4} We have
$$
\bt(a) \le \left(\frac{2(a\ln d+\ln 2)}{d}\right)^{1/2}.
$$
\end{Theorem}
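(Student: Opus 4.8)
The plan is to run Lemma~\ref{L2.1} in reverse: a dictionary $\D$ for which $\bt(\D)$ is large yields, via Lemma~\ref{L2.2}, an economical covering of the ball $B_2^d$, and this must be reconciled with the volumetric lower bound for covering numbers in Proposition~\ref{P2.1}. Since $\ell_2^d$ is its own dual, for a normalized dictionary $\bt(\D,\ell_2^d)=\min_{\|F\|_2=1}\max_{g\in\D}|\<F,g\>|$, and in particular every unit vector $x$ satisfies $\max_{g\in\D}|\<x,g\>|\ge\bt(\D)$. I would fix an arbitrary $\D$ with $N:=|\D|\le d^a$, write $\bt:=\bt(\D,\ell_2^d)$, and aim for the stated upper bound on $\bt$.

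\emph{Main case $\bt\le 2^{-1/2}$.} Here Lemma~\ref{L2.2} applies directly: with $x^j:=\bt g^j$ and $r:=(1-\bt^2)^{1/2}$, the $2N$ balls $B_2^d(\pm x^j,r)$ cover $B_2^d$, so $N_r(B_2^d)\le 2N\le 2d^a$. On the other hand Proposition~\ref{P2.1} gives the lower bound $r^{-d}\le N_r(B_2^d)$. Combining these,
\[
(1-\bt^2)^{d/2}=r^{d}\ge (2d^a)^{-1}.
\]
Taking logarithms yields $\tfrac d2\ln(1-\bt^2)\ge-(\ln 2+a\ln d)$, and since $\ln(1-u)\le-u$ for $u\in[0,1)$ we have $-\tfrac d2\bt^2\ge\tfrac d2\ln(1-\bt^2)$. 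Chaining the two inequalities gives $\tfrac d2\bt^2\le a\ln d+\ln 2$, i.e. $\bt\le\bigl(2(a\ln d+\ln 2)/d\bigr)^{1/2}$, exactly as claimed. I would note in passing that the additive $\ln 2$ in the bound is produced precisely by the factor $2$ in the count $2N$ of balls, i.e. by the passage to $\D^\pm$ in Lemma~\ref{L2.2}.

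\emph{Complementary case $\bt>2^{-1/2}$.} Now Lemma~\ref{L2.2} is unavailable: with centers $\bt g^j$ the covering radius one can guarantee is $\bt$ rather than $(1-\bt^2)^{1/2}$, and the resulting inequality $\bt^{-d}\le 2d^a$ is a \emph{lower}, not an upper, estimate for $\bt$ and produces no contradiction. I would dispose of this range by showing it is either trivial or vacuous. If $2(a\ln d+\ln 2)/d\ge 1$ the asserted bound is immediate from $\bt\le 1$. Otherwise I would argue that $\bt>2^{-1/2}$ cannot occur under $N\le d^a$: the $2N$ spherical caps $\{x\in S^{d-1}:\<x,\pm g^j\>\ge\bt\}$, each of angular radius strictly less than $\pi/4$ since $\bt>\cos(\pi/4)$, cover $S^{d-1}$, and a standard cap-measure estimate forces $N$ to grow exponentially in $d$, contradicting $N\le d^a$.

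The substantive content is the covering/volume computation of the main case, which is short and delivers the exact constant. The one genuinely delicate point, and the place where I expect to spend the most care, is the complementary case $\bt>2^{-1/2}$: Lemma~\ref{L2.2} breaks there, so one must verify separately that such highly ``saturating'' dictionaries simply do not exist under the polynomial cardinality constraint $|\D|\le d^a$ (equivalently, that the bound can only be tight in the regime $\bt\le 2^{-1/2}$ where the covering argument is valid).
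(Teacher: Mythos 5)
Your main case is the paper's proof verbatim: Lemma~\ref{L2.2} converts the dictionary into a covering of $B_2^d$ by $2N$ balls of radius $(1-\bt^2)^{1/2}$, the lower bound of Proposition~\ref{P2.1} yields $(1-\bt^2)^{-d/2}\le 2d^a$, and $\ln(1-u)\le -u$ delivers exactly the stated constant (your remark that the $\ln 2$ comes from the doubling $N\mapsto 2N$ is also correct). The only divergence is your complementary case $\bt>2^{-1/2}$, which the paper does not discuss at all --- its proof tacitly assumes the hypothesis of Lemma~\ref{L2.2} holds --- so noticing it is to your credit.

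However, your sketch for that case does not close as stated. In your ``otherwise'' branch you only know $a\ln d+\ln 2<d/2$, i.e. $2d^a<e^{d/2}$, whereas covering $S^{d-1}$ by $2N$ caps of angular radius less than $\pi/4$ forces only $2N\ge c\,(\sin(\pi/4))^{-(d-1)}=c\,2^{(d-1)/2}\approx e^{0.347\,d}$; since $e^{0.347\,d}<e^{d/2}$, this exponential growth is compatible with $N\le d^a$ in the allowed regime and produces no contradiction. The constants are genuinely tight here. What does close the case is the quantitative cap estimate $\sigma(\{x\in S^{d-1}:\<x,g\>\ge t\})\le e^{-dt^2/2}$ (the same tool behind the upper bound in Theorem~\ref{T2.3}): since every $x\in S^{d-1}$ lies in one of the $2N$ caps $\{x:\<x,\pm g^j\>\ge\bt\}$, it gives $1\le 2N e^{-d\bt^2/2}$, hence $\bt^2\le 2(a\ln d+\ln 2)/d$ directly --- and in fact for every value of $\bt$, which renders both Lemma~\ref{L2.2} and the case split unnecessary. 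Either import that estimate to finish your second case, or note explicitly that the theorem is being proved under the (unstated in the paper) reduction to $\bt\le 2^{-1/2}$.
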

\begin{proof} By Lemma \ref{L2.2} a dictionary with $\bt:=\bt(\D,\ell_2^d)\le 2^{-1/2}$ provides a covering of $B^d_2$ by balls of radius $r = (1-\bt^2)^{1/2}$. Combining Proposition \ref{P2.1} (lower bound) and our assumption that $|\D| \le d^a$ we obtain
$$
(1-\bt^2)^{-d/2} \le 2d^a,
$$
$$
\ln(1-\bt^2)\ge -\frac{2(a\ln d+\ln 2)}{d}.
$$
Using the inequality $\ln (1-u) \le -u$, $u\in[0,1)$, we get
$$
\bt^2 \le \frac{2(a\ln d+\ln 2)}{d},
$$
which proves Theorem \ref{T2.4}.
\end{proof}
\begin{Theorem}\label{T2.5} Let $b>0$ be such that $\frac{b\ln d}{d} \le \frac{1}{2}$. There exists $\D$ such that $|\D|\le Cd^{b+5/2}$ and 
$$
\bt(\D,\ell_2^d) \ge \left(\frac{b\ln d}{d}\right)^{1/2}.
$$
\end{Theorem}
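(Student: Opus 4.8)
The plan is to recast the conclusion as a covering problem on the sphere and then solve it by a random construction. Because $X=\ell_2^d$ is self-dual, the unit-norm functionals are exactly the vectors $x\in S^{d-1}$ with $F_x(g)=\langle x,g\rangle$, so
$$
\bt(\D,\ell_2^d)=\min_{x\in S^{d-1}}\max_{g\in\D}|\langle x,g\rangle| .
$$
Hence, writing $\bt_0:=(b\ln d/d)^{1/2}$, the inequality $\bt(\D,\ell_2^d)\ge\bt_0$ says precisely that the sets $\{x\in S^{d-1}:|\langle x,g\rangle|\ge\bt_0\}$, $g\in\D$, cover $S^{d-1}$; by Lemma \ref{L2.1} and Lemma \ref{L2.2} this is the same as covering $B_2^d$ by the balls $B_2^d(\pm\bt_0 g,(1-\bt_0^2)^{1/2})$. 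So it suffices to exhibit $N\le Cd^{b+5/2}$ unit vectors whose double caps cover the sphere. The hypothesis $b\ln d/d\le 1/2$ guarantees $\bt_0\le 2^{-1/2}$, so that Lemma \ref{L2.2} applies and the radius $(1-\bt_0^2)^{1/2}$ is admissible.

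I would choose $g^1,\dots,g^N$ independently and uniformly on $S^{d-1}$ and prove the covering holds with positive probability. First discretize: fix a $\de$-net $\mathcal N\subset S^{d-1}$ with $\de:=d^{-1/2}$ and $|\mathcal N|\le(3/\de)^d$, which exists by Proposition \ref{P2.1}. A margin estimate reduces covering the whole sphere to covering the net with a slightly enlarged threshold: if every $y\in\mathcal N$ has $\max_j|\langle y,g^j\rangle|\ge\bt_0+\de$, then, since $\|g^j\|_2=1$, any $x\in S^{d-1}$ with nearest net point $y$ satisfies $\max_j|\langle x,g^j\rangle|\ge\max_j|\langle y,g^j\rangle|-\de\ge\bt_0$. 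For a fixed $y$ put $\mu:=\sigma(\{g:|\langle y,g\rangle|\ge\bt_0+\de\})$; then $\Pr(y\text{ uncovered})=(1-\mu)^N\le e^{-\mu N}$, and a union bound gives expected number of uncovered net points at most $|\mathcal N|e^{-\mu N}\le e^{d\ln(3/\de)-\mu N}$. Taking $N=\lceil d\ln(3/\de)/\mu\rceil$ makes this quantity less than $1$, so some realization covers $\mathcal N$, hence all of $S^{d-1}$.

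The heart of the matter is a lower bound on the cap measure $\mu$, and this is the step I expect to be the main obstacle. Using that $u=\langle y,g\rangle$ has density $c_d(1-u^2)^{(d-3)/2}$ with $c_d\ge c_0\sqrt d$, I restrict the integral to a short interval $[a,a+\Delta]$ with $a:=\bt_0+\de$ and $\Delta:=d^{-1/2}$ to get
$$
\mu\ge 2c_d\,\Delta\,(1-(a+\Delta)^2)^{(d-3)/2}.
$$
The delicate point is that the enlargements must not spoil the exponent: since $\de=d^{-1/2}\ll\bt_0$ (guaranteed by $b\ln d/d\le 1/2$) and $\Delta=d^{-1/2}$, one checks $a^2=\bt_0^2(1+o(1))=\tfrac{b\ln d}{d}(1+o(1))$, and then $\ln(1-t)\ge-t-t^2$ yields $(1-(a+\Delta)^2)^{(d-3)/2}\ge d^{-b/2-o(1)}$, whence $\mu\ge c\,d^{-b/2-o(1)}$. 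Substituting, $N\le d\ln(3/\de)/\mu\le Cd^{1+b/2}(\ln d)\,d^{o(1)}$, which is comfortably below $d^{b+5/2}$. The only genuine care needed is in this cap estimate — pinning the exponent $-b/2$ and verifying that neither the net margin $\de$ (which drives $\ln|\mathcal N|\sim d\ln(1/\de)$) nor the slab width $\Delta$ degrades it; the rest is routine bookkeeping, and the generous slack between $1+b/2$ and $b+5/2$ absorbs any crude constants.
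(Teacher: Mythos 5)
Your strategy is genuinely different from the paper's. The paper's proof is deterministic and two lines long: it takes the covering of $B_2^d$ by balls of radius $r=(1-\bt^2)^{1/2}$ guaranteed by Rogers' theorem, $N_r(B_2^d)\le Cd^{5/2}r^{-d}$ (this is exactly where the exponent $5/2$ comes from), feeds it into Lemma \ref{L2.1}, and bounds $(1-\bt^2)^{-d/2}\le e^{\bt^2d}=d^b$ via the crude inequality $\ln(1-u)\ge -2u$. You replace the citation of Rogers by a self-contained random construction (uniform caps, a $\de$-net, a union bound, and an explicit cap-measure lower bound). The reduction to covering the sphere by double caps is common to both arguments; note that your appeal to Lemma \ref{L2.2} is superfluous, since only the direction ``caps cover $\Rightarrow \bt\ge\bt_0$'' is needed, which is Lemma \ref{L2.1} (or just the definition of $\bt$). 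What your route buys is independence from the covering literature and, for $b=O(1)$, a sharper exponent (about $1+b/2$ instead of $b+5/2$, consistent with the volume lower bound of order $d^{b/2}$); what it costs is the probabilistic bookkeeping you yourself flag as the delicate step.

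That bookkeeping contains one real error, though a repairable one. The claims $(1-(a+\Delta)^2)^{(d-3)/2}\ge d^{-b/2-o(1)}$ and $N\le Cd^{1+b/2+o(1)}\ln d$ fail in part of the range the hypothesis allows. With $a+\Delta=\bt_0+2d^{-1/2}$ one gets
$$
\frac d2(a+\Delta)^2=\frac{b\ln d}{2}+2\sqrt{b\ln d}+2,
$$
and the cross term $2\sqrt{b\ln d}$ is $o(\ln d)$ only when $b=o(\ln d)$. The theorem permits $b$ as large as $d/(2\ln d)$, in which case $2\sqrt{b\ln d}=\sqrt{2d}$ and the ``correction'' $e^{-2\sqrt{b\ln d}}$ is super-polynomially small, not $d^{-o(1)}$. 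The conclusion survives because $2\sqrt{b\ln d}\le\frac{b\ln d}{2}+2$ by AM--GM, so $e^{2\sqrt{b\ln d}}\le e^2d^{b/2}$, hence $\mu\ge c\,d^{-b}$ and $N\le Cd^{1+b}\ln d\le C'd^{b+5/2}$; but you must run this worst-case estimate explicitly rather than invoke $o(1)$. Two smaller points also need patching: the slab $[a,a+\Delta]$ must lie in $[0,1]$, and $\ln(1-t)\ge -t-t^2$ requires $t$ bounded away from $1$; both can fail for small $d$ (e.g.\ $\bt_0+2d^{-1/2}>1$ for $d\le 46$), so the low-dimensional cases must be absorbed into the constant $C$ separately, say by taking a fixed fine net of $S^{d-1}$.
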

\begin{proof} The classical result of Rogers (see \cite{Ro}, Theorem 3) implies the following bound
\begin{equation}\label{2.10}
N_r(B_2) \le Cd^{5/2}r^{-d} .
\end{equation}
The reader can find further slight improvements of (\ref{2.10}) in \cite{VG} and \cite{Du}. 
The bound (\ref{2.10}) is sufficient for our purposes. Set $\bt^2=\frac{b\ln d}{d}$, $r^2=1-\bt^2$.
Then by Lemma \ref{L2.1} the dictionary $\D$ associated with the above covering satisfies 
$\bt(\D,\ell_2^d)\ge \bt$. Also, by (\ref{2.10})
$$
|\D| \le Cd^{5/2}(1-\bt^2)^{-d/2}.
$$
Using the inequality $\ln(1-u)\ge -2u$, $u\in [0,1/2]$ we obtain
$$
(1-\bt^2)^{-d/2}\le e^{\bt^2d} = d^b.
$$
Thus
$$
|\D| \le Cd^{b+5/2}.
$$
\end{proof}

\section{Sparse optimization with respect to a dictionary}

{\bf 3.1. Special energy function.} We begin with the definition of the Weak Chebyshev Greedy Algorithm (WCGA(co)) for convex optimization (see \cite{Tco1}).

 {\bf Weak Chebyshev Greedy Algorithm (WCGA(co)).} Let   $t\in (0,1]$  be a weakness parameter. 
We define $G_0 := 0$. Then for each $m\ge 1$ we have the following inductive definition.

(1) $\varphi_m :=\varphi^{c,t}_m \in \D$ is any element satisfying
$$
|\<-E'(G_{m-1}),\varphi_m\>| \ge t  \sup_{g\in \D}|\< -E'(G_{m-1}),g\>|.
$$

(2) Define
$
\Phi_m := \Phi^t_m := \sp \{\varphi_j\}_{j=1}^m,
$
and define $G_m := G_m^{c,t}$ to be the point from $\Phi_m$ at which $E$ attains the minimum:
$
E(G_m)=\inf_{x\in \Phi_m}E(x).
$

First, consider a special case $E(x):=V(\|x-f_0\|)$ with $V$ differentiable on $(0,\infty)$ and $V'(u)>0$ on $(0,\infty)$. Then
\begin{equation}\label{3.1}
-E'(x) = V'(\|x-f_0\|)F_{f_0-x}.
\end{equation} 
Therefore, the greedy step: $\ff_m\in\D$ is any element satisfying 
\begin{equation}\label{3.2}
|\<-E'(G_{m-1}),\ff_m\>| \ge t \sup_{g\in \D}|\<-E'(G_{m-1},g\>|
\end{equation}
is equivalent to
\begin{equation}\label{3.3}
|\<F_{f_0-G_{m-1}},\ff_m\>| \ge t\sup_{g\in\D} |\<F_{f_0-G_{m-1}},g\>|.
\end{equation}
Relation (\ref{3.3}) is exactly the greedy step of a dual greedy-type algorithm. 

The Chebyshev step of the WCGA(co): find $G_m\in \Phi_m$ such that
\begin{equation}\label{3.4}
E(G_m)=\min_{x\in\Phi_m}E(x)
\end{equation}
is equivalent to: find $G_m\in\Phi_m$ such that 
$$
\|f_0-G_m\| = \min_{x\in\Phi_m}\|f_0-x\|.
$$
Thus, the WCGA(co) with $E(x)=V(\|x-f_0\|)$ is equivalent to the WCGA applied to $f_0$.
In the same way we obtain that the WGAFR(co) applied to $E(x)=V(\|x-f_0\|)$ is equivalent to the realization of the WGAFR for $f_0$. Therefore, results on approximation of $f_0$, in particular results of Section 2, apply in this case. \newline
{\bf 3.2. Convex energy function.} Second, we discuss a setting where we assume that $E(x)$ is a Fr{\'e}chet differentiable convex function with modulus of smoothness $\rho(u)\le \ga u^q$, $q\in(1,2]$ on the domain
$$
D:=\{x:E(x)\le E(0)\}.
$$
We assume that   $D$ is a bounded domain: 
for all $x\in D$ we have $\|x\|\le C_0$. We begin our discussion with a general case of Banach space $X$. 

\begin{Theorem}\label{T3.1} Let $E$ be a uniformly smooth convex function with modulus of smoothness $\rho(E,u)\le \gamma u^q$, $1<q\le 2$ on the domain $D$. Let $\bt:=\bt(\D,X)>0$ be the parameter defined in (\ref{2.4}). 
Then we have for the WCGA(co) and the WGAFR(co) 
\begin{equation}\label{3.5}
E(G_m)-\inf_{x\in D}E(x) \le  C(t,\ga,C_0)\bt^{-q}m^{1-q} . 
\end{equation}
\end{Theorem}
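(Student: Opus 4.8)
The plan is to prove Theorem \ref{T3.1} by establishing a one–step gain inequality of the form $a_m\le a_{m-1}-c\,a_{m-1}^{q'}$, where $a_m:=E(G_m)-\inf_{x\in D}E(x)$ and $q':=q/(q-1)$, and then solving this recursion. Conceptually, the atomic–norm parameter $B$ of Theorem \ref{T1.2} will be replaced here by the quantity $2C_0/\bt$. First I would record that the iterates stay in $D$: both algorithms minimize $E$ over a set containing $G_{m-1}$ (for the WCGA(co) because $G_{m-1}\in\Phi_{m-1}\subset\Phi_m$; for the WGAFR(co) by taking $w=\la=0$), so $E(G_m)$ is nonincreasing and bounded by $E(0)$, whence $G_m\in D$ and $\|G_m\|\le C_0$. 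Let $x^*\in D$ be the minimizer, so $\|x^*-G_{m-1}\|\le 2C_0$; we may assume $E'(G_{m-1})\neq0$, since otherwise $G_{m-1}$ is already a global minimizer and $a_{m-1}=0$.

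The first key step links the greedy selection to the optimality gap. Setting $F:=-E'(G_{m-1})/\|E'(G_{m-1})\|$, which has $\|F\|_{X^*}=1$, the definition (\ref{2.4}) of $\bt$ gives $\max_{g\in\D}|F(g)|\ge\bt$, that is
$$
\sup_{g\in\D}|\<-E'(G_{m-1}),g\>|\ge \bt\,\|E'(G_{m-1})\|.
$$
On the other hand, convexity (\ref{c1.3}) together with $\|x^*-G_{m-1}\|\le 2C_0$ yields
$$
a_{m-1}=E(G_{m-1})-E^*\le \<-E'(G_{m-1}),x^*-G_{m-1}\>\le 2C_0\,\|E'(G_{m-1})\|.
$$
Combining the two displays with the weak greedy step gives the crucial lower bound
$$
|\<-E'(G_{m-1}),\ff_m\>|\ge t\sup_{g\in\D}|\<-E'(G_{m-1}),g\>|\ge \frac{t\bt}{2C_0}\,a_{m-1}.
$$

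The second step extracts the gain. Let $\psi_m:=\pm\ff_m$ carry the sign making $\<-E'(G_{m-1}),\psi_m\>=|\<-E'(G_{m-1}),\ff_m\>|$; since $\|\psi_m\|\le1$, Lemma \ref{L1.1} with $x=G_{m-1}$, $y=\psi_m$ and $\rho(E,u)\le\ga u^q$ gives, for every $\la\ge0$,
$$
E(G_{m-1}+\la\psi_m)\le E(G_{m-1})-\la\,\frac{t\bt}{2C_0}\,a_{m-1}+2\ga\la^q.
$$
Both algorithms minimize $E$ over a set containing the line $G_{m-1}+\la\psi_m$ (the subspace $\Phi_m$ for the WCGA(co); the free relaxation with $w=0$ for the WGAFR(co)), so $a_m$ is bounded by the right-hand side minus $E^*$. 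Minimizing the one-variable function $\la\mapsto -\la\frac{t\bt}{2C_0}a_{m-1}+2\ga\la^q$ over $\la\ge0$ produces
$$
a_m\le a_{m-1}-C_1\,\bt^{q'}a_{m-1}^{q'},\qquad C_1=C_1(t,\ga,q,C_0)>0.
$$

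Finally I would invoke the standard recursion lemma (see \cite{Tbook}): writing $\al:=q'-1=1/(q-1)$, the sequence $b_m:=a_m^{-\al}$ satisfies $b_m\ge b_{m-1}+\al C_1\bt^{q'}$ (the hypothesis $C_1\bt^{q'}a_{m-1}^{\al}\le1$ needed here holds automatically, since $E\ge E^*$ forces the optimized right-hand side above to be nonnegative), whence $a_m\le (\al C_1\bt^{q'}m)^{-1/\al}$. Because $q'(q-1)=q$ and $1/\al=q-1$, this is exactly $C(t,\ga,q,C_0)\bt^{-q}m^{1-q}$, proving (\ref{3.5}) for both algorithms. The main obstacle is the first step: lacking the atomic–norm hypothesis of Theorem \ref{T1.2}, one cannot bound $\<-E'(G_{m-1}),x^*-G_{m-1}\>$ by $B\sup_g|\<-E'(G_{m-1}),g\>|$ directly. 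The remedy is to bound it crudely by $2C_0\|E'(G_{m-1})\|$ via boundedness of $D$ and to recover $\sup_g|\<-E'(G_{m-1}),g\>|\ge\bt\|E'(G_{m-1})\|$ straight from the definition of $\bt$; this is precisely where the factor $\bt^{-1}$ enters, becoming $\bt^{-q}$ after the optimization, and one must verify that it does not degrade the exponent of $m$. The remaining pieces — Lemma \ref{L1.1}, the scalar optimization, and the recursion — are routine.
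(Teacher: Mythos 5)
Your proposal is correct and follows essentially the same route as the paper: the same chain $a_{m-1}\le 2C_0\|E'(G_{m-1})\|_{X^*}$ (from convexity and boundedness of $D$) combined with $\sup_{g\in\D}|\<-E'(G_{m-1}),g\>|\ge\bt\|E'(G_{m-1})\|_{X^*}$ (from the definition of $\bt$), then Lemma \ref{L1.1}, the one-dimensional optimization in $\la$, and the standard recursion $a_m\le a_{m-1}(1-a_{m-1}^{1/(q-1)}/B)$ solved via the lemma from \cite{Tbook}. The only cosmetic differences are that you absorb the sign into $\psi_m$ where the paper splits into two cases, and you minimize exactly in $\la$ where the paper picks an explicit balancing value $\la_1$; the constants and the exponents $\bt^{-q}m^{1-q}$ come out the same.
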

\begin{proof} Lemma \ref{L1.1} implies for any positive $\la$
\begin{equation}\label{3.7}
E(G_{m-1}+\la\ff_m)\le E(G_{m-1}) -\la \<-E'(G_{m-1}),\ff_m\> +2\rho(E,\la).
\end{equation}
For the dual-type greedy algorithm we have
\begin{equation}\label{3.8}
|\<-E'(G_{m-1}),\ff_m\>| \ge t\sup_{g\in\D}|\<-E'(G_{m-1}),g\>|.
\end{equation}
Using the definition of $\bt(\D,X)$ from (\ref{2.4}) we get either
\begin{equation}\label{3.9}
\<-E'(G_{m-1}),\ff_m\> \ge t\bt \|E'(G_{m-1})\|_{X^*}
\end{equation}
or
$$
\<-E'(G_{m-1}),-\ff_m\> \ge t\bt \|E'(G_{m-1})\|_{X^*}.
$$
We treat the case (\ref{3.9}). The other case is treated in the same way. 
Define
$$
w:=\inf_{x\in D}E(x),\qquad a_m:=E(G_m)-w.
$$
Then convexity of $E$ implies that for any $x,y$ 
\begin{equation}\label{3.10}
E(y)\ge E(x)+\<E'(x),y-x\>.
\end{equation}
Therefore,
\begin{equation}\label{3.11}
a_{m-1}\le \|E'(G_{m-1})\|_{X^*}\sup_{f\in D}\|G_{m-1}-f\|_X \le \|E'(G_{m-1})\|_{X^*} 2C_0.
\end{equation}
Thus,
$$
\|E'(G_{m-1})\|_{X^*} \ge a_{m-1}(2C_0)^{-1}.
$$
Inequalities (\ref{3.7})--(\ref{3.9}) give
$$
E(G_{m-1}+\la\ff_m) -w \le a_{m-1} -\la t\bt a_{m-1}(2C_0)^{-1}+2\ga\la^q.
$$
Choose $\la_1$ from the equation
$$
\la t\bt a_{m-1}(2C_0)^{-1}=4\ga\la^q, \quad \la_1 = \left(\frac{t\bt a_{m-1}}{8\ga C_0}\right)^{\frac{1}{q-1}}.
$$
Then
\begin{equation}\label{3.12}
a_m\le E(G_{m-1}+\la_1\ff_m)-w\le a_{m-1}(1-a_{m-1}^{\frac{1}{q-1}}/B),
\end{equation}
where
\begin{equation}\label{3.13}
B^{-1} = \left(\frac{t\bt}{4C_0}\right)^{\frac{q}{q-1}}(2\ga)^{-\frac{1}{q-1}}.
\end{equation}
Raising both sides of inequality (\ref{3.12}) to the power $\frac{1}{q-1}$ and taking into account the inequality $x^r\le x$ for $r\ge 1$, $0\le x\le 1$, we obtain
$$
a_m^{\frac{1}{q-1}} \le a_{m-1}^{\frac{1}{q-1}} (1-a_{m-1}^{\frac{1}{q-1}}/B).
$$
By Lemma 2.16 from \cite{Tbook}, p. 91, it implies
$$
a_m^{\frac{1}{q-1}} \le B/m,\qquad a_m\le B^{q-1}m^{1-q}.
$$
Representation (\ref{3.13}) gives
$$
B^{q-1} \le C(t,\ga,C_0)\bt^{-q}.
$$
This completes the proof of Theorem \ref{T3.1}.
\end{proof} 

In a certain sense, Theorem \ref{T3.1} complements Theorem \ref{T1.2} from Introduction. 
Theorem \ref{T1.2} gives the rate $m^{1-q}$ for any dictionary under assumption that a point $x^*$ of minimum: $E(x^*)=E^*$ belongs to $A_1(\D)$. Theorem \ref{T3.1} gives the same rate $m^{1-q}$ without any assumption on $x^*$ but with an assumption on $\D$ that $\beta(\D,X)>0$. 

{\bf 3.3. A bound for the atomic norm.} We show how Theorem \ref{T3.1} can be derived from Theorem \ref{T1.2} from Introduction. We note that Theorem \ref{T1.2} was proved in \cite{Tco1} for both the WCGA(co) and the WGAFR(co).
Application of Theorem \ref{T1.2} is based on the following characteristic
$$
R_1:=R_1(\D,X):= \sup_{\|x\|_X\le 1} \|x\|_{A_1(\D)},
$$
where $ \|x\|_{A_1(\D)}:=\inf\{c:x/c\in A_1(\D)\}$ is the atomic norm of $x$ with respect to $\D$.
We begin with a result relating characteristics $R_1(\D,X)$ and $\bt(\D,X)$ (see (\ref{2.4})) for a general not necessarily finite dimensional Banach space $X$. 
\begin{Theorem}\label{T3.3} Let $X$ be a uniformly smooth Banach space. Suppose a dictionary $\D$ is such that $\bt(\D,X)>0$. Then
$$
R_1(\D,X) = \bt(\D,X)^{-1}.
$$
\end{Theorem}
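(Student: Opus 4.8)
The plan is to identify $\|\cdot\|_{A_1(\D)}$ as the Minkowski gauge of the closed, convex, symmetric set $A_1(\D)$, to compute its support functional, and then to read off $R_1:=R_1(\D,X)$ and $\bt:=\bt(\D,X)$ as reciprocal extremal ratios. Since $A_1(\D)$ is closed, $\{x:\ \|x\|_{A_1(\D)}\le 1\}=A_1(\D)$, and for $F\in X^*$ the support functional is
\[
\sigma(F):=\sup_{x\in A_1(\D)}F(x)=\sup_{g\in\D^\pm}F(g)=\sup_{g\in\D}|F(g)|,
\]
the middle equality holding because the supremum of a continuous linear functional over the closure of the convex hull of a set is already attained on the set, and the last because $\D^\pm=\{\pm g\}$ is symmetric. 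With this notation the defining formula (\ref{2.4}) reads $\bt=\inf_{F\neq 0}\sigma(F)/\|F\|_{X^*}$; equivalently, $\bt$ is the largest constant with $\sigma(F)\ge\bt\|F\|_{X^*}$ for all $F$.

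For the bound $R_1\le\bt^{-1}$ I would argue by Hahn--Banach separation. Fix $x$ with $\|x\|_X\le 1$; it suffices to show $\bt x\in A_1(\D)$, since then $\|x\|_{A_1(\D)}\le\bt^{-1}$. If $\bt x\notin A_1(\D)$, then because $A_1(\D)$ is closed and convex there is $F\in X^*$, normalized to $\|F\|_{X^*}=1$, with $F(\bt x)>\sigma(F)$. But $\sigma(F)\ge\bt\|F\|_{X^*}=\bt$, so $\bt F(x)>\bt$, i.e. $F(x)>1$, contradicting $F(x)\le\|F\|_{X^*}\|x\|_X\le 1$. Hence $\bt x\in A_1(\D)$, and taking the supremum over $\|x\|_X\le 1$ gives $R_1\le\bt^{-1}$.

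For the reverse bound $R_1\ge\bt^{-1}$ I would use near-extremizers, which avoids any appeal to attainment or reflexivity. Given $\e>0$, choose $F$ with $\|F\|_{X^*}=1$ and $\sigma(F)\le\bt+\e$, and then $x$ with $\|x\|_X\le 1$ and $F(x)\ge 1-\e$. The elementary duality $F(x)\le\sigma(F)\,\|x\|_{A_1(\D)}$, immediate from the definition of the gauge, gives $1-\e\le(\bt+\e)\|x\|_{A_1(\D)}$, whence $R_1\ge\|x\|_{A_1(\D)}\ge(1-\e)/(\bt+\e)$; letting $\e\to0$ yields $R_1\ge\bt^{-1}$. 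Combining the two inequalities proves $R_1=\bt^{-1}$.

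The only genuine obstacle is making the two duality facts rigorous in the possibly infinite-dimensional setting: the identity $\{\|\cdot\|_{A_1(\D)}\le 1\}=A_1(\D)$, where closedness of $A_1(\D)$ is essential, together with the support-functional computation, and the separation step. Both are standard consequences of the Hahn--Banach theorem applied to a point and a closed convex set in a Banach space, and the near-extremizer formulation sidesteps the question of whether the infimum defining $\bt$ or a norming element for $F$ is attained. Everything else is bookkeeping with the definitions.
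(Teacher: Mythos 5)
Your proof is correct, and the lower bound $R_1\ge\bt^{-1}$ is essentially the paper's own argument (near-extremizing $F$ and $x$, the duality $F(x)\le\sigma(F)\|x\|_{A_1(\D)}$, then $\e\to0$). The upper bound $R_1\le\bt^{-1}$, however, is where you genuinely diverge. The paper proves it constructively: it runs the Dual Greedy Algorithm DGA$(t,b,\mu)$ on $x$, invokes the convergence theorem for that algorithm in uniformly smooth spaces to get an expansion $x=\sum_k c_kg_k$, and bounds $\sum_k c_k\le (t(1-b))^{-1}\bt^{-1}\|x\|$ using $r_\D(f_{k-1})\ge\bt$ at every step. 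You instead use Hahn--Banach separation of the point $\bt x$ from the closed convex set $A_1(\D)$, together with the identification of $\|\cdot\|_{A_1(\D)}$ as the gauge of $A_1(\D)$ and the computation $\sigma(F)=\sup_{g\in\D}|F(g)|\ge\bt\|F\|_{X^*}$. Both are sound; the trade-off is that your separation argument is shorter, purely existential, and does not use uniform smoothness at all (it works in any Banach space), whereas the paper's route is in the spirit of the rest of the article: it produces an explicit algorithmic representation of $x$ with $\ell_1$-norm of coefficients at most $(1+\e)\bt^{-1}\|x\|$, which is of independent interest for a paper about greedy expansions. Your handling of the technical points (closedness of $A_1(\D)$ so that the gauge's unit ball is $A_1(\D)$ itself, sup of a linear functional over a closed convex hull being attained on the generating set, and avoiding any attainment of the infimum defining $\bt$) is all that is needed to make the argument rigorous in infinite dimensions.
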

\begin{proof} We begin with the upper bound $R_1\le\beta^{-1}$. We give an algorithmic way of construction of a representation $x=\sum_i c_ig_i$ such that
$$
\sum_i|c_i| \le (1+\epsilon)\beta(\D,X)^{-1}\|x\|.
$$
It is clear that we have $R_1(\D^\pm,X)=R_1(\D,X)$ and $\bt(\D^\pm,X)=\bt(\D,X)$. Therefore,
without loss of generality we may assume that $\D$ is a symmetric dictionary: $g\in\D$ implies $-g\in\D$. 
 We use the following greedy algorithm (see, for instance, \cite{Tbook}, Section 6.7.3). 
Let $X$ be a uniformly smooth Banach space with modulus of smoothness $\rho(u)$, and let $\mu(u)$ be a continuous majorant of $\rho(u)$: $\rho(u)\le\mu(u)$, $u\in[0,\infty)$ such that $\mu(u)/u$ goes to $0$ monotonically.  For a symmetric dictionary $\D$ denote
$$
r_\D(f):=\sup_{g\in\D}F_f(g).
$$

 {\bf Dual Greedy Algorithm with parameters $(t,b,\mu)$ (DGA$(t,b,\mu)$).}
Let $X$ and $\mu(u)$ be as above.   For parameters $t\in(0,1]$, $b\in (0,1)$ and $f_0\in X$ we define sequences
$\{f_m\}_{m=0}^\infty$, $\{\ff_m\}_{m=1}^\infty$, $\{c_m\}_{m=1}^\infty$ inductively.   If for $m\ge 1$ $f_{m-1}=0$ then we set $f_j=0$ for $j\ge m$ and stop. If $f_{m-1}\neq 0$ then we conduct the following three steps.

(1) Take any $\ff_m \in \D$ such that
$$
F_{f_{m-1}}(\ff_m) \ge tr_\D(f_{m-1}).  
$$

(2) Choose $c_m>0$ from the equation
$$
\|f_{m-1}\|\mu(c_m/\|f_{m-1}\|) = \frac{tb}{2}c_mr_\D(f_{m-1}).  
$$

(3) Define
$$
f_m:=f_{m-1}-c_m\ff_m.  
$$

The following theorem (see, for instance, \cite{Tbook}, p. 365) guarantees convergence of the DGA$(t,b,\mu)$.

\begin{Theorem}\label{T3.4} Let $X$ be a uniformly smooth Banach space with the modulus of smoothness $\rho(u)$ and let $\mu(u)$ be a continuous majorant of $\rho(u)$ with the property $\mu(u)/u \downarrow 0$ as $u\to +0$. Then, for any $t\in (0,1]$ and $b\in (0,1)$ the DGA$(t,b,\mu)$ converges for each dictionary $\D$ and all $f_0\in X$.
\end{Theorem}

Take an $f_0\in X$ and apply the DGA$(t,b,\mu)$ to it. By Theorem \ref{T3.4} it gives a convergent expansion
$$
f_0 = \sum_{k=1}^\infty c_kg_k,\quad c_k>0,\quad g_k\in \D.
$$
It is easy to see from the choice of $c_k$ that  (see, for instance, \cite{Tbook}, p. 370--371)
$$
t(1-b)c_kr_\D(f_{k-1})\le \|f_{k-1}\|-\|f_k\|
$$
and
$$
\sum_{k=1}^\infty c_kr_\D(f_{k-1}) \le (t(1-b))^{-1}\|f_0\|.
$$
Next,
$$
r_\D(f_{k-1})\ge \bt(\D,X).
$$
Therefore, for any $t\in (0,1)$ and $b\in (0,1)$ we get
$$
\sum_{k=1}^\infty c_k \le (t(1-b))^{-1}\|f_0\|\bt(\D,X)^{-1}.
$$
This implies the upper bound in Theorem \ref{T3.3}. 

We now proceed to the lower bound. Suppose $R_1<\infty$. Fix $\e>0$. Then for any $F$, $\|F\|_{X^*}=1$, there exists $x_\e$ such that $F(x_\e)\ge 1-\e$, $\|x_\e\|=1$. Let 
$$
x_\e=\sum_i c^\e_ig^\e_i
$$
with $c^\e_i$ satisfying
$$
\sum_i |c^\e_i| \le R_1(1+\e).
$$
Further
$$
1-\e \le F(x_\e) = \sum_ic^\e_i F(g^\e_i) \le \sum_i|c^\e_i| \max_i |F(g^\e_i)| \le R_1(1+\e)\max_i |F(g^\e_i)|.
$$
Therefore
$$
\max_i |F(g^\e_i)| \ge R_1^{-1} \frac{1-\e}{1+\e}.
$$
This implies
$$
\beta(\D,X) \ge R_1^{-1}.
$$

\end{proof}

The assumption $\bt(\D,X)>0$ in Theorem \ref{T3.1} implies that for any $f_0$ such that $\|f_0\|\le C_0$ we get by Theorem \ref{T3.3}
$$
f_0\bt(\D,X)C_0^{-1} \in A_1(\D).
$$
Applying Theorem \ref{T1.2} with $\epsilon =0$, $B=C_0\bt(\D,X)^{-1}$ we obtain Theorem \ref{T3.1}.

\newpage

\end{document}